\documentclass[letterpaper, 10 pt, conference]{ieeeconf}
\IEEEoverridecommandlockouts
\usepackage{amsmath,amsfonts,amssymb}
\newenvironment{IEEEproof}{\par\noindent\textit{Proof:}\ \ignorespaces}%
                          {\unskip\nobreak\hfill$\square$\par\medskip}
\usepackage{array}
\usepackage{booktabs}
\usepackage{textcomp}
\usepackage{stfloats}
\usepackage{url}
\usepackage{graphicx}
\usepackage{cite}
\newtheorem{theorem}{Theorem}

\newtheorem{assumption}{Assumption}
\newtheorem{remark}{Remark}

\providecommand{\N}[1]{\ifcsname pa:#1\endcsname\csname pa:#1\endcsname\else\textbf{??#1??}\errmessage{paperA: undefined number #1}\fi}
\expandafter\newcommand\csname pa:A_lin_window\endcsname{0.880}
\expandafter\newcommand\csname pa:As_rule_freach\endcsname{7.00}
\expandafter\newcommand\csname pa:G_locked\endcsname{50000}
\expandafter\newcommand\csname pa:G_peak\endcsname{85000}
\expandafter\newcommand\csname pa:J_m_eff\endcsname{1.83}
\expandafter\newcommand\csname pa:J_m_eff_hr\endcsname{0.034}
\expandafter\newcommand\csname pa:K_I_max\endcsname{1245}
\expandafter\newcommand\csname pa:a05c_filt2_hole_hi\endcsname{2.44}
\expandafter\newcommand\csname pa:a05c_filt2_hole_lo\endcsname{1.47}
\expandafter\newcommand\csname pa:a05s_edge_hi\endcsname{20}
\expandafter\newcommand\csname pa:a05s_edge_lo\endcsname{17.5}
\expandafter\newcommand\csname pa:alloc_f0dB_mu\endcsname{10.5}
\expandafter\newcommand\csname pa:alloc_f1dB_mu\endcsname{6.59}
\expandafter\newcommand\csname pa:alloc_f3dB_mu\endcsname{5.13}
\expandafter\newcommand\csname pa:alloc_f3dB_ratio\endcsname{2.40}
\expandafter\newcommand\csname pa:alloc_model_f3dB_mu\endcsname{12.3}
\expandafter\newcommand\csname pa:alloc_model_floor\endcsname{0.00048}
\expandafter\newcommand\csname pa:alloc_model_fx\endcsname{18.1}
\expandafter\newcommand\csname pa:alloc_model_fx_ReHalf\endcsname{18.1}
\expandafter\newcommand\csname pa:alloc_mu_amp_fmax\endcsname{15.9}
\expandafter\newcommand\csname pa:alloc_mu_amp_max\endcsname{3.13}
\expandafter\newcommand\csname pa:alloc_ratio_fmax\endcsname{15.9}
\expandafter\newcommand\csname pa:alloc_ratio_max\endcsname{0.896}
\expandafter\newcommand\csname pa:bk_over_A05\endcsname{18.1}
\expandafter\newcommand\csname pa:blt_G_clamp\endcsname{50000}
\expandafter\newcommand\csname pa:blt_G_peak\endcsname{85000}
\expandafter\newcommand\csname pa:blt_Jm_ratio\endcsname{1.06}
\expandafter\newcommand\csname pa:blt_Kt_ratio\endcsname{0.827}
\expandafter\newcommand\csname pa:blt_T_mu_req_clamp\endcsname{6.83}
\expandafter\newcommand\csname pa:blt_T_mu_req_peak\endcsname{4.62}
\expandafter\newcommand\csname pa:blt_acc_clamp\endcsname{64.1}
\expandafter\newcommand\csname pa:blt_acc_peak\endcsname{109}
\expandafter\newcommand\csname pa:blt_f_reach_17_peak\endcsname{16.1}
\expandafter\newcommand\csname pa:blt_f_reach_3_clamp\endcsname{13.5}
\expandafter\newcommand\csname pa:blt_f_reach_3_peak\endcsname{17.5}
\expandafter\newcommand\csname pa:ceil_asym_err_max\endcsname{10.1}
\expandafter\newcommand\csname pa:ceil_asym_err_min\endcsname{1.41}
\expandafter\newcommand\csname pa:ceil_c50_r15\endcsname{0.505}
\expandafter\newcommand\csname pa:ceil_c50_r20\endcsname{0.519}
\expandafter\newcommand\csname pa:ceil_c75_r15\endcsname{0.764}
\expandafter\newcommand\csname pa:ceil_c75_r20\endcsname{0.770}
\expandafter\newcommand\csname pa:ceil_macro_f15_errpct\endcsname{3.75}
\expandafter\newcommand\csname pa:ceil_macro_f15_meas\endcsname{5.84}
\expandafter\newcommand\csname pa:ceil_macro_f15_theory\endcsname{5.63}
\expandafter\newcommand\csname pa:ceil_macro_f20_errpct\endcsname{1.41}
\expandafter\newcommand\csname pa:ceil_macro_f20_meas\endcsname{3.21}
\expandafter\newcommand\csname pa:ceil_macro_f20_theory\endcsname{3.17}
\expandafter\newcommand\csname pa:ceil_npts\endcsname{9}
\expandafter\newcommand\csname pa:ceil_pisea_f15_errpct\endcsname{4.73}
\expandafter\newcommand\csname pa:ceil_pisea_f15_meas\endcsname{5.90}
\expandafter\newcommand\csname pa:ceil_pisea_f15_theory\endcsname{5.63}
\expandafter\newcommand\csname pa:ceil_pisea_f20_errpct\endcsname{3.23}
\expandafter\newcommand\csname pa:ceil_pisea_f20_meas\endcsname{3.27}
\expandafter\newcommand\csname pa:ceil_pisea_f20_theory\endcsname{3.17}
\expandafter\newcommand\csname pa:ceil_slope_c100\endcsname{0.550}
\expandafter\newcommand\csname pa:ceil_slope_c50\endcsname{0.565}
\expandafter\newcommand\csname pa:ceil_slope_c75\endcsname{0.554}
\expandafter\newcommand\csname pa:ceil_slope_theory\endcsname{0.562}
\expandafter\newcommand\csname pa:ceil_spring_err_max\endcsname{1.84}
\expandafter\newcommand\csname pa:ceil_spring_err_min\endcsname{-4.01}
\expandafter\newcommand\csname pa:ceiling_f10\endcsname{12.7}
\expandafter\newcommand\csname pa:ceiling_f15\endcsname{5.63}
\expandafter\newcommand\csname pa:ceiling_f20\endcsname{3.17}
\expandafter\newcommand\csname pa:des_A_macro_spec\endcsname{5.49}
\expandafter\newcommand\csname pa:des_G\endcsname{86735}
\expandafter\newcommand\csname pa:des_T_mu_req\endcsname{4.51}
\expandafter\newcommand\csname pa:des_acc\endcsname{116}
\expandafter\newcommand\csname pa:des_f_reach_cont\endcsname{16.7}
\expandafter\newcommand\csname pa:des_f_reach_peak\endcsname{24.4}
\expandafter\newcommand\csname pa:des_margin_peak\endcsname{1.40}
\expandafter\newcommand\csname pa:eps\endcsname{0.024}
\expandafter\newcommand\csname pa:eps_clamped\endcsname{0.016}
\expandafter\newcommand\csname pa:eps_ratio_torque\endcsname{0.257}
\expandafter\newcommand\csname pa:epsstar_torque_L1\endcsname{10.8}
\expandafter\newcommand\csname pa:epsstar_torque_L2\endcsname{112}
\expandafter\newcommand\csname pa:epsstar_torque_L3\endcsname{0}
\expandafter\newcommand\csname pa:epsstar_torque_best\endcsname{0.093}
\expandafter\newcommand\csname pa:epsstar_torque_dbest\endcsname{0.09}
\expandafter\newcommand\csname pa:epsstar_torque_half\endcsname{0.03}
\expandafter\newcommand\csname pa:f_clamped_dom\endcsname{11.8}
\expandafter\newcommand\csname pa:f_cross_Tmu17\endcsname{27.3}
\expandafter\newcommand\csname pa:f_cross_Tmu3\endcsname{20.5}
\expandafter\newcommand\csname pa:f_knee_macro\endcsname{3.56}
\expandafter\newcommand\csname pa:f_s_meas\endcsname{17.8}
\expandafter\newcommand\csname pa:f_s_model\endcsname{17.1}
\expandafter\newcommand\csname pa:free_fs_peak_hz\endcsname{17.8}
\expandafter\newcommand\csname pa:free_fws05c_macro_ff_g10\endcsname{0.486}
\expandafter\newcommand\csname pa:free_fws05c_macro_ff_g2\endcsname{0.903}
\expandafter\newcommand\csname pa:free_fws05c_macro_ff_g3\endcsname{0.651}
\expandafter\newcommand\csname pa:free_fws05c_macro_ff_g310\endcsname{0.445}
\expandafter\newcommand\csname pa:free_fws05c_macro_ff_gmid\endcsname{0.460}
\expandafter\newcommand\csname pa:free_fws05c_macro_ff_ph10\endcsname{-53.7}
\expandafter\newcommand\csname pa:free_fws05c_macro_ff_sat_pct\endcsname{0}
\expandafter\newcommand\csname pa:free_fws05c_macro_g10\endcsname{0.094}
\expandafter\newcommand\csname pa:free_fws05c_macro_g2\endcsname{0.697}
\expandafter\newcommand\csname pa:free_fws05c_macro_g3\endcsname{0.554}
\expandafter\newcommand\csname pa:free_fws05c_macro_g310\endcsname{0.419}
\expandafter\newcommand\csname pa:free_fws05c_macro_gmid\endcsname{0.527}
\expandafter\newcommand\csname pa:free_fws05c_macro_ph10\endcsname{-69.4}
\expandafter\newcommand\csname pa:free_fws05c_macro_sat_pct\endcsname{0}
\expandafter\newcommand\csname pa:free_fws05c_pisea_ff_g10\endcsname{0.806}
\expandafter\newcommand\csname pa:free_fws05c_pisea_ff_g2\endcsname{1.05}
\expandafter\newcommand\csname pa:free_fws05c_pisea_ff_g3\endcsname{1.05}
\expandafter\newcommand\csname pa:free_fws05c_pisea_ff_g310\endcsname{0.890}
\expandafter\newcommand\csname pa:free_fws05c_pisea_ff_gmid\endcsname{0.914}
\expandafter\newcommand\csname pa:free_fws05c_pisea_ff_ph10\endcsname{-9.86}
\expandafter\newcommand\csname pa:free_fws05c_pisea_ff_sat_pct\endcsname{0}
\expandafter\newcommand\csname pa:free_fws05c_pisea_fs_g3\endcsname{1.28}
\expandafter\newcommand\csname pa:free_fws05c_pisea_fs_ph3\endcsname{-163}
\expandafter\newcommand\csname pa:free_fws05c_pisea_g10\endcsname{0.790}
\expandafter\newcommand\csname pa:free_fws05c_pisea_g2\endcsname{1.04}
\expandafter\newcommand\csname pa:free_fws05c_pisea_g3\endcsname{1.03}
\expandafter\newcommand\csname pa:free_fws05c_pisea_g310\endcsname{0.847}
\expandafter\newcommand\csname pa:free_fws05c_pisea_gmid\endcsname{0.867}
\expandafter\newcommand\csname pa:free_fws05c_pisea_mu_g3\endcsname{2.25}
\expandafter\newcommand\csname pa:free_fws05c_pisea_mu_ph3\endcsname{5.09}
\expandafter\newcommand\csname pa:free_fws05c_pisea_ph10\endcsname{-12.8}
\expandafter\newcommand\csname pa:free_fws05c_pisea_sat_pct\endcsname{0}
\expandafter\newcommand\csname pa:free_fws2c_macro_ff_g10\endcsname{0.888}
\expandafter\newcommand\csname pa:free_fws2c_macro_ff_g2\endcsname{0.715}
\expandafter\newcommand\csname pa:free_fws2c_macro_ff_g3\endcsname{0.687}
\expandafter\newcommand\csname pa:free_fws2c_macro_ff_g310\endcsname{0.680}
\expandafter\newcommand\csname pa:free_fws2c_macro_ff_gmid\endcsname{0.694}
\expandafter\newcommand\csname pa:free_fws2c_macro_ff_ph10\endcsname{0.369}
\expandafter\newcommand\csname pa:free_fws2c_macro_ff_sat_pct\endcsname{0}
\expandafter\newcommand\csname pa:free_fws2c_macro_g10\endcsname{0.304}
\expandafter\newcommand\csname pa:free_fws2c_macro_g2\endcsname{0.301}
\expandafter\newcommand\csname pa:free_fws2c_macro_g3\endcsname{0.289}
\expandafter\newcommand\csname pa:free_fws2c_macro_g310\endcsname{0.254}
\expandafter\newcommand\csname pa:free_fws2c_macro_gmid\endcsname{0.261}
\expandafter\newcommand\csname pa:free_fws2c_macro_ph10\endcsname{2.53}
\expandafter\newcommand\csname pa:free_fws2c_macro_sat_pct\endcsname{0}
\expandafter\newcommand\csname pa:free_fws2c_pisea_ff_g10\endcsname{0.979}
\expandafter\newcommand\csname pa:free_fws2c_pisea_ff_g2\endcsname{0.968}
\expandafter\newcommand\csname pa:free_fws2c_pisea_ff_g3\endcsname{0.968}
\expandafter\newcommand\csname pa:free_fws2c_pisea_ff_g310\endcsname{0.943}
\expandafter\newcommand\csname pa:free_fws2c_pisea_ff_gmid\endcsname{0.943}
\expandafter\newcommand\csname pa:free_fws2c_pisea_ff_ph10\endcsname{-1.45}
\expandafter\newcommand\csname pa:free_fws2c_pisea_ff_sat_pct\endcsname{0.052}
\expandafter\newcommand\csname pa:free_fws2c_pisea_fs_g3\endcsname{0.574}
\expandafter\newcommand\csname pa:free_fws2c_pisea_fs_ph3\endcsname{142}
\expandafter\newcommand\csname pa:free_fws2c_pisea_g10\endcsname{0.784}
\expandafter\newcommand\csname pa:free_fws2c_pisea_g2\endcsname{0.732}
\expandafter\newcommand\csname pa:free_fws2c_pisea_g3\endcsname{0.681}
\expandafter\newcommand\csname pa:free_fws2c_pisea_g310\endcsname{0.698}
\expandafter\newcommand\csname pa:free_fws2c_pisea_gmid\endcsname{0.694}
\expandafter\newcommand\csname pa:free_fws2c_pisea_mu_g3\endcsname{1.20}
\expandafter\newcommand\csname pa:free_fws2c_pisea_mu_ph3\endcsname{-19.7}
\expandafter\newcommand\csname pa:free_fws2c_pisea_ph10\endcsname{-5.98}
\expandafter\newcommand\csname pa:free_fws2c_pisea_sat_pct\endcsname{5.55}
\expandafter\newcommand\csname pa:free_fws2c_pisea_tau_mu_max\endcsname{3}
\expandafter\newcommand\csname pa:free_gain_bound\endcsname{0.339}
\expandafter\newcommand\csname pa:free_ratio\endcsname{0.044}
\expandafter\newcommand\csname pa:frf_a05c_filt18_err10\endcsname{0.754}
\expandafter\newcommand\csname pa:frf_a05c_filt2_err10\endcsname{0.085}
\expandafter\newcommand\csname pa:frf_a05c_macro_err10\endcsname{0.968}
\expandafter\newcommand\csname pa:frf_a05c_macro_f3db\endcsname{3.91}
\expandafter\newcommand\csname pa:frf_a05c_macro_fmax\endcsname{15.9}
\expandafter\newcommand\csname pa:frf_a05c_macro_ftrack\endcsname{1.22}
\expandafter\newcommand\csname pa:frf_a05c_macro_g10\endcsname{0.153}
\expandafter\newcommand\csname pa:frf_a05c_macro_ph10\endcsname{-73.7}
\expandafter\newcommand\csname pa:frf_a05c_pisea_err10\endcsname{0.261}
\expandafter\newcommand\csname pa:frf_a05c_pisea_f3db\endcsname{15.9}
\expandafter\newcommand\csname pa:frf_a05c_pisea_fmax\endcsname{15.9}
\expandafter\newcommand\csname pa:frf_a05c_pisea_ftrack\endcsname{15.9}
\expandafter\newcommand\csname pa:frf_a05c_pisea_g10\endcsname{1.02}
\expandafter\newcommand\csname pa:frf_a05c_pisea_ph10\endcsname{-14.8}
\expandafter\newcommand\csname pa:frf_a2c_macro_f3db\endcsname{15.9}
\expandafter\newcommand\csname pa:frf_a2c_macro_fmax\endcsname{15.9}
\expandafter\newcommand\csname pa:frf_a2c_macro_ftrack\endcsname{3.66}
\expandafter\newcommand\csname pa:frf_a2c_macro_g10\endcsname{1.07}
\expandafter\newcommand\csname pa:frf_a2c_macro_ph10\endcsname{-53.4}
\expandafter\newcommand\csname pa:frf_a2c_pisea_f3db\endcsname{15.9}
\expandafter\newcommand\csname pa:frf_a2c_pisea_fmax\endcsname{15.9}
\expandafter\newcommand\csname pa:frf_a2c_pisea_ftrack\endcsname{14.2}
\expandafter\newcommand\csname pa:frf_a2c_pisea_g10\endcsname{1.23}
\expandafter\newcommand\csname pa:frf_a2c_pisea_ph10\endcsname{-4.81}
\expandafter\newcommand\csname pa:guard_As_freach\endcsname{6.89}
\expandafter\newcommand\csname pa:guard_Tmax\endcsname{34.6}
\expandafter\newcommand\csname pa:guard_f5\endcsname{20.1}
\expandafter\newcommand\csname pa:guard_f7\endcsname{15.7}
\expandafter\newcommand\csname pa:link_gain_bound\endcsname{0.684}
\expandafter\newcommand\csname pa:link_pend_hz\endcsname{0.790}
\expandafter\newcommand\csname pa:link_ratio\endcsname{0.161}
\expandafter\newcommand\csname pa:micro_delay_ms\endcsname{3.99}
\expandafter\newcommand\csname pa:micro_ph20\endcsname{-28.7}
\expandafter\newcommand\csname pa:mr_Bsq2noDOBlam01_eta_ss\endcsname{0.297}
\expandafter\newcommand\csname pa:mr_Bsq2noDOBlam01_lam_eta\endcsname{0.03}
\expandafter\newcommand\csname pa:mr_Bsq2noDOBlam01_taumu_ss\endcsname{0.021}
\expandafter\newcommand\csname pa:mr_Bsq2noDOBlam02_eta_ss\endcsname{0.332}
\expandafter\newcommand\csname pa:mr_Bsq2noDOBlam02_lam_eta\endcsname{0.066}
\expandafter\newcommand\csname pa:mr_Bsq2noDOBlam02_taumu_ss\endcsname{0.063}
\expandafter\newcommand\csname pa:mr_Bsq2noDOBlam05_eta_ss\endcsname{0.260}
\expandafter\newcommand\csname pa:mr_Bsq2noDOBlam05_lam_eta\endcsname{0.130}
\expandafter\newcommand\csname pa:mr_Bsq2noDOBlam05_taumu_ss\endcsname{0.129}
\expandafter\newcommand\csname pa:mr_Bsq3DOB50lam01_eta_ss\endcsname{0.00091}
\expandafter\newcommand\csname pa:mr_Bsq3DOB50lam01_lam_eta\endcsname{9.1e-05}
\expandafter\newcommand\csname pa:mr_Bsq3DOB50lam01_taumu_ss\endcsname{0.00017}
\expandafter\newcommand\csname pa:mr_eta_star_max\endcsname{0.332}
\expandafter\newcommand\csname pa:mr_eta_star_mean\endcsname{0.296}
\expandafter\newcommand\csname pa:mr_eta_star_min\endcsname{0.260}
\expandafter\newcommand\csname pa:mr_peak_KI0\endcsname{0.934}
\expandafter\newcommand\csname pa:mr_peak_KI10\endcsname{0.960}
\expandafter\newcommand\csname pa:mr_takeover_ms_KI0\endcsname{99.9}
\expandafter\newcommand\csname pa:mr_takeover_ms_KI10\endcsname{50.9}
\expandafter\newcommand\csname pa:omega_s_model\endcsname{107}
\expandafter\newcommand\csname pa:p_B_m\endcsname{12.2}
\expandafter\newcommand\csname pa:p_J_l_bar\endcsname{0.071}
\expandafter\newcommand\csname pa:p_J_l_link\endcsname{0.300}
\expandafter\newcommand\csname pa:p_J_l_link_total\endcsname{0.300}
\expandafter\newcommand\csname pa:p_J_m\endcsname{1.56}
\expandafter\newcommand\csname pa:p_K_I\endcsname{10}
\expandafter\newcommand\csname pa:p_K_ds\endcsname{160}
\expandafter\newcommand\csname pa:p_K_ps\endcsname{8000}
\expandafter\newcommand\csname pa:p_K_s\endcsname{780}
\expandafter\newcommand\csname pa:p_Kt_eff\endcsname{0.105}
\expandafter\newcommand\csname pa:p_N_d\endcsname{300}
\expandafter\newcommand\csname pa:p_T_mu_1s\endcsname{3}
\expandafter\newcommand\csname pa:p_T_mu_cont\endcsname{1.70}
\expandafter\newcommand\csname pa:p_design_J_m\endcsname{1.47}
\expandafter\newcommand\csname pa:p_design_K_s\endcsname{750}
\expandafter\newcommand\csname pa:p_design_Kt\endcsname{0.127}
\expandafter\newcommand\csname pa:p_design_T_mu_cont\endcsname{2.10}
\expandafter\newcommand\csname pa:p_design_T_mu_peak\endcsname{6.30}
\expandafter\newcommand\csname pa:p_design_tau_max\endcsname{52}
\expandafter\newcommand\csname pa:p_design_tau_max_peak\endcsname{170}
\expandafter\newcommand\csname pa:p_dt\endcsname{0.001}
\expandafter\newcommand\csname pa:p_fs\endcsname{1000}
\expandafter\newcommand\csname pa:p_lam\endcsname{0.100}
\expandafter\newcommand\csname pa:p_mgl_link\endcsname{7.40}
\expandafter\newcommand\csname pa:p_omega_Q\endcsname{50}
\expandafter\newcommand\csname pa:p_omega_mu\endcsname{4681}
\expandafter\newcommand\csname pa:p_omega_mu_hz\endcsname{745}
\expandafter\newcommand\csname pa:p_spring_allow\endcsname{38.5}
\expandafter\newcommand\csname pa:p_tau_bk\endcsname{9.03}
\expandafter\newcommand\csname pa:p_tau_c\endcsname{9.57}
\expandafter\newcommand\csname pa:p_tau_m_max_link\endcsname{150}
\expandafter\newcommand\csname pa:p_tau_m_max_locked\endcsname{100}
\expandafter\newcommand\csname pa:p_tau_max_peak\endcsname{170}
\expandafter\newcommand\csname pa:p_tau_mu_max\endcsname{3}
\expandafter\newcommand\csname pa:p_theta_s_factor\endcsname{0.900}
\expandafter\newcommand\csname pa:plane_A0p5_macro_ftrack\endcsname{1}
\expandafter\newcommand\csname pa:plane_A0p5_pisea_ftrack\endcsname{17.5}
\expandafter\newcommand\csname pa:plane_A15_macro_ftrack\endcsname{5}
\expandafter\newcommand\csname pa:plane_A15_pisea_ftrack\endcsname{5}
\expandafter\newcommand\csname pa:plane_A2_macro_ftrack\endcsname{3}
\expandafter\newcommand\csname pa:plane_A2_pisea_ftrack\endcsname{14}
\expandafter\newcommand\csname pa:plane_A5_macro_ftrack\endcsname{5}
\expandafter\newcommand\csname pa:plane_A5_pisea_ftrack\endcsname{10}
\expandafter\newcommand\csname pa:share_a05c_filt18\endcsname{28.9}
\expandafter\newcommand\csname pa:share_a05c_filt2\endcsname{78.9}
\expandafter\newcommand\csname pa:share_a05c_macro\endcsname{28.9}
\expandafter\newcommand\csname pa:share_a05c_pisea\endcsname{100}
\expandafter\newcommand\csname pa:ss_a15_macro_ftrack\endcsname{5}
\expandafter\newcommand\csname pa:ss_a15_macro_ftrip\endcsname{7}
\expandafter\newcommand\csname pa:ss_a15_macro_g1\endcsname{1.02}
\expandafter\newcommand\csname pa:ss_a15_macro_g10\endcsname{1.33}
\expandafter\newcommand\csname pa:ss_a15_macro_g5\endcsname{1.21}
\expandafter\newcommand\csname pa:ss_a15_macro_muamp1\endcsname{0}
\expandafter\newcommand\csname pa:ss_a15_macro_muamp10\endcsname{0}
\expandafter\newcommand\csname pa:ss_a15_macro_muamp5\endcsname{0}
\expandafter\newcommand\csname pa:ss_a15_macro_ph1\endcsname{-1.14}
\expandafter\newcommand\csname pa:ss_a15_macro_ph10\endcsname{-75.3}
\expandafter\newcommand\csname pa:ss_a15_macro_ph5\endcsname{-12.4}
\expandafter\newcommand\csname pa:ss_a15_macro_satmu1\endcsname{0}
\expandafter\newcommand\csname pa:ss_a15_macro_satmu10\endcsname{0}
\expandafter\newcommand\csname pa:ss_a15_macro_satmu5\endcsname{0}
\expandafter\newcommand\csname pa:ss_a15_macro_theta_pct_at_trip\endcsname{49.4}
\expandafter\newcommand\csname pa:ss_a15_pisea_ftrack\endcsname{5}
\expandafter\newcommand\csname pa:ss_a15_pisea_ftrip\endcsname{7}
\expandafter\newcommand\csname pa:ss_a15_pisea_g1\endcsname{1.000}
\expandafter\newcommand\csname pa:ss_a15_pisea_g5\endcsname{1.10}
\expandafter\newcommand\csname pa:ss_a15_pisea_muamp1\endcsname{0.479}
\expandafter\newcommand\csname pa:ss_a15_pisea_muamp5\endcsname{2.64}
\expandafter\newcommand\csname pa:ss_a15_pisea_ph1\endcsname{0.037}
\expandafter\newcommand\csname pa:ss_a15_pisea_ph5\endcsname{-5.20}
\expandafter\newcommand\csname pa:ss_a15_pisea_satmu1\endcsname{0}
\expandafter\newcommand\csname pa:ss_a15_pisea_satmu5\endcsname{19.2}
\expandafter\newcommand\csname pa:ss_a15_pisea_theta_pct_at_trip\endcsname{50.4}
\expandafter\newcommand\csname pa:ss_a5_filt18_err10\endcsname{0.765}
\expandafter\newcommand\csname pa:ss_a5_filt2_err10\endcsname{0.641}
\expandafter\newcommand\csname pa:ss_a5_macro_ceiling_f15\endcsname{5.84}
\expandafter\newcommand\csname pa:ss_a5_macro_ceiling_f20\endcsname{3.21}
\expandafter\newcommand\csname pa:ss_a5_macro_err10\endcsname{0.841}
\expandafter\newcommand\csname pa:ss_a5_macro_ftrack\endcsname{5}
\expandafter\newcommand\csname pa:ss_a5_macro_g1\endcsname{1.05}
\expandafter\newcommand\csname pa:ss_a5_macro_g10\endcsname{1.38}
\expandafter\newcommand\csname pa:ss_a5_macro_g15\endcsname{1.17}
\expandafter\newcommand\csname pa:ss_a5_macro_g20\endcsname{0.642}
\expandafter\newcommand\csname pa:ss_a5_macro_g5\endcsname{1.27}
\expandafter\newcommand\csname pa:ss_a5_macro_muamp1\endcsname{0.00078}
\expandafter\newcommand\csname pa:ss_a5_macro_muamp10\endcsname{0.00025}
\expandafter\newcommand\csname pa:ss_a5_macro_muamp15\endcsname{5.7e-05}
\expandafter\newcommand\csname pa:ss_a5_macro_muamp20\endcsname{9.3e-05}
\expandafter\newcommand\csname pa:ss_a5_macro_muamp5\endcsname{0.00028}
\expandafter\newcommand\csname pa:ss_a5_macro_ph1\endcsname{-1.85}
\expandafter\newcommand\csname pa:ss_a5_macro_ph10\endcsname{-37.2}
\expandafter\newcommand\csname pa:ss_a5_macro_ph15\endcsname{-111}
\expandafter\newcommand\csname pa:ss_a5_macro_ph20\endcsname{-147}
\expandafter\newcommand\csname pa:ss_a5_macro_ph5\endcsname{-19.5}
\expandafter\newcommand\csname pa:ss_a5_macro_satmu1\endcsname{0}
\expandafter\newcommand\csname pa:ss_a5_macro_satmu10\endcsname{0}
\expandafter\newcommand\csname pa:ss_a5_macro_satmu15\endcsname{0}
\expandafter\newcommand\csname pa:ss_a5_macro_satmu20\endcsname{0}
\expandafter\newcommand\csname pa:ss_a5_macro_satmu5\endcsname{0}
\expandafter\newcommand\csname pa:ss_a5_pisea_ceiling_f15\endcsname{5.90}
\expandafter\newcommand\csname pa:ss_a5_pisea_ceiling_f20\endcsname{3.27}
\expandafter\newcommand\csname pa:ss_a5_pisea_err10\endcsname{0.368}
\expandafter\newcommand\csname pa:ss_a5_pisea_ftrack\endcsname{10}
\expandafter\newcommand\csname pa:ss_a5_pisea_g1\endcsname{1.00}
\expandafter\newcommand\csname pa:ss_a5_pisea_g10\endcsname{1.22}
\expandafter\newcommand\csname pa:ss_a5_pisea_g15\endcsname{0.953}
\expandafter\newcommand\csname pa:ss_a5_pisea_g20\endcsname{0.472}
\expandafter\newcommand\csname pa:ss_a5_pisea_g5\endcsname{1.05}
\expandafter\newcommand\csname pa:ss_a5_pisea_muamp1\endcsname{0.279}
\expandafter\newcommand\csname pa:ss_a5_pisea_muamp10\endcsname{2.59}
\expandafter\newcommand\csname pa:ss_a5_pisea_muamp15\endcsname{2.44}
\expandafter\newcommand\csname pa:ss_a5_pisea_muamp20\endcsname{2.46}
\expandafter\newcommand\csname pa:ss_a5_pisea_muamp5\endcsname{2.34}
\expandafter\newcommand\csname pa:ss_a5_pisea_ph1\endcsname{0.067}
\expandafter\newcommand\csname pa:ss_a5_pisea_ph10\endcsname{-15.4}
\expandafter\newcommand\csname pa:ss_a5_pisea_ph15\endcsname{-90.1}
\expandafter\newcommand\csname pa:ss_a5_pisea_ph20\endcsname{-102}
\expandafter\newcommand\csname pa:ss_a5_pisea_ph5\endcsname{1.28}
\expandafter\newcommand\csname pa:ss_a5_pisea_satmu1\endcsname{0}
\expandafter\newcommand\csname pa:ss_a5_pisea_satmu10\endcsname{18.3}
\expandafter\newcommand\csname pa:ss_a5_pisea_satmu15\endcsname{19.2}
\expandafter\newcommand\csname pa:ss_a5_pisea_satmu20\endcsname{19.2}
\expandafter\newcommand\csname pa:ss_a5_pisea_satmu5\endcsname{0}
\expandafter\newcommand\csname pa:surf_A_reach\endcsname{8.38}
\expandafter\newcommand\csname pa:surf_A_spec\endcsname{10}
\expandafter\newcommand\csname pa:surf_Ks_needed\endcsname{1014}
\expandafter\newcommand\csname pa:surf_Ks_ratio\endcsname{1.30}
\expandafter\newcommand\csname pa:surf_acc_built\endcsname{109}
\expandafter\newcommand\csname pa:surf_acc_needed\endcsname{142}
\expandafter\newcommand\csname pa:surf_acc_ratio\endcsname{1.30}
\expandafter\newcommand\csname pa:surf_f_spec\endcsname{20}
\expandafter\newcommand\csname pa:surf_req_built\endcsname{4.62}
\expandafter\newcommand\csname pa:surf_req_design\endcsname{4.51}
\expandafter\newcommand\csname pa:surf_taumax_needed\endcsname{221}
\expandafter\newcommand\csname pa:tm_fund_ratio\endcsname{1.20}
\expandafter\newcommand\csname pa:tr_FS1_macro_iae\endcsname{5.94}
\expandafter\newcommand\csname pa:tr_FS1_macro_iae_hr\endcsname{0.094}
\expandafter\newcommand\csname pa:tr_FS1_macro_itae\endcsname{92.6}
\expandafter\newcommand\csname pa:tr_FS1_macro_itae_hr\endcsname{0.755}
\expandafter\newcommand\csname pa:tr_FS1_macro_n\endcsname{3}
\expandafter\newcommand\csname pa:tr_FS1_macro_rms\endcsname{0.223}
\expandafter\newcommand\csname pa:tr_FS1_macro_rms_hr\endcsname{0.0035}
\expandafter\newcommand\csname pa:tr_FS1_macro_rmsrel\endcsname{22.3}
\expandafter\newcommand\csname pa:tr_FS1_pisea_iae\endcsname{0.669}
\expandafter\newcommand\csname pa:tr_FS1_pisea_iae_hr\endcsname{0}
\expandafter\newcommand\csname pa:tr_FS1_pisea_itae\endcsname{10.2}
\expandafter\newcommand\csname pa:tr_FS1_pisea_itae_hr\endcsname{0}
\expandafter\newcommand\csname pa:tr_FS1_pisea_n\endcsname{1}
\expandafter\newcommand\csname pa:tr_FS1_pisea_rms\endcsname{0.028}
\expandafter\newcommand\csname pa:tr_FS1_pisea_rms_hr\endcsname{0}
\expandafter\newcommand\csname pa:tr_FS1_pisea_rmsrel\endcsname{2.82}
\expandafter\newcommand\csname pa:tr_FS1_pisea_throttle_s\endcsname{0}
\expandafter\newcommand\csname pa:tr_FS1_ratio\endcsname{7.92}
\expandafter\newcommand\csname pa:tr_FS2_macro_iae\endcsname{30.8}
\expandafter\newcommand\csname pa:tr_FS2_macro_iae_hr\endcsname{0.164}
\expandafter\newcommand\csname pa:tr_FS2_macro_itae\endcsname{633}
\expandafter\newcommand\csname pa:tr_FS2_macro_itae_hr\endcsname{2.99}
\expandafter\newcommand\csname pa:tr_FS2_macro_n\endcsname{3}
\expandafter\newcommand\csname pa:tr_FS2_macro_rms\endcsname{0.825}
\expandafter\newcommand\csname pa:tr_FS2_macro_rms_hr\endcsname{0.0041}
\expandafter\newcommand\csname pa:tr_FS2_macro_rmsrel\endcsname{16.5}
\expandafter\newcommand\csname pa:tr_FS2_pisea_iae\endcsname{2.07}
\expandafter\newcommand\csname pa:tr_FS2_pisea_iae_hr\endcsname{0}
\expandafter\newcommand\csname pa:tr_FS2_pisea_itae\endcsname{42.8}
\expandafter\newcommand\csname pa:tr_FS2_pisea_itae_hr\endcsname{0}
\expandafter\newcommand\csname pa:tr_FS2_pisea_n\endcsname{1}
\expandafter\newcommand\csname pa:tr_FS2_pisea_rms\endcsname{0.069}
\expandafter\newcommand\csname pa:tr_FS2_pisea_rms_hr\endcsname{0}
\expandafter\newcommand\csname pa:tr_FS2_pisea_rmsrel\endcsname{1.37}
\expandafter\newcommand\csname pa:tr_FS2_pisea_throttle_s\endcsname{0}
\expandafter\newcommand\csname pa:tr_FS2_ratio\endcsname{12.0}
\expandafter\newcommand\csname pa:tr_FS3_macro_iae\endcsname{11.8}
\expandafter\newcommand\csname pa:tr_FS3_macro_iae_hr\endcsname{0.242}
\expandafter\newcommand\csname pa:tr_FS3_macro_itae\endcsname{121}
\expandafter\newcommand\csname pa:tr_FS3_macro_itae_hr\endcsname{2.02}
\expandafter\newcommand\csname pa:tr_FS3_macro_n\endcsname{3}
\expandafter\newcommand\csname pa:tr_FS3_macro_rms\endcsname{0.739}
\expandafter\newcommand\csname pa:tr_FS3_macro_rms_hr\endcsname{0.017}
\expandafter\newcommand\csname pa:tr_FS3_macro_rmsrel\endcsname{6.16}
\expandafter\newcommand\csname pa:tr_FS3_pisea_iae\endcsname{3.41}
\expandafter\newcommand\csname pa:tr_FS3_pisea_iae_hr\endcsname{0.044}
\expandafter\newcommand\csname pa:tr_FS3_pisea_itae\endcsname{35.6}
\expandafter\newcommand\csname pa:tr_FS3_pisea_itae_hr\endcsname{0.366}
\expandafter\newcommand\csname pa:tr_FS3_pisea_n\endcsname{3}
\expandafter\newcommand\csname pa:tr_FS3_pisea_rms\endcsname{0.233}
\expandafter\newcommand\csname pa:tr_FS3_pisea_rms_hr\endcsname{0.00064}
\expandafter\newcommand\csname pa:tr_FS3_pisea_rmsrel\endcsname{1.94}
\expandafter\newcommand\csname pa:tr_FS3_pisea_throttle_s\endcsname{0}
\expandafter\newcommand\csname pa:tr_FS3_ratio\endcsname{3.18}
\expandafter\newcommand\csname pa:tr_FS3nff_macro_rms\endcsname{2.76}
\expandafter\newcommand\csname pa:tr_FS3nff_macro_rmsrel\endcsname{23.0}
\expandafter\newcommand\csname pa:tr_FS3nff_pisea_rms\endcsname{1.59}
\expandafter\newcommand\csname pa:tr_FS3nff_pisea_rmsrel\endcsname{13.2}
\expandafter\newcommand\csname pa:tr_FS3nff_pisea_throttle_s\endcsname{15.3}
\expandafter\newcommand\csname pa:w_cross_hz\endcsname{20.5}
\expandafter\newcommand\csname pa:zf_fast_macro_excluded\endcsname{0}
\expandafter\newcommand\csname pa:zf_fast_macro_n\endcsname{3}
\expandafter\newcommand\csname pa:zf_fast_macro_pk_max\endcsname{4.25}
\expandafter\newcommand\csname pa:zf_fast_macro_pk_min\endcsname{2.92}
\expandafter\newcommand\csname pa:zf_fast_macro_rms_max\endcsname{1.76}
\expandafter\newcommand\csname pa:zf_fast_macro_rms_mean\endcsname{1.71}
\expandafter\newcommand\csname pa:zf_fast_macro_rms_min\endcsname{1.65}
\expandafter\newcommand\csname pa:zf_fast_macro_v50_max\endcsname{74.4}
\expandafter\newcommand\csname pa:zf_fast_macro_v50_min\endcsname{52.8}
\expandafter\newcommand\csname pa:zf_fast_pisea_excluded\endcsname{2}
\expandafter\newcommand\csname pa:zf_fast_pisea_n\endcsname{1}
\expandafter\newcommand\csname pa:zf_fast_pisea_pk_max\endcsname{0.570}
\expandafter\newcommand\csname pa:zf_fast_pisea_pk_min\endcsname{0.570}
\expandafter\newcommand\csname pa:zf_fast_pisea_rms_max\endcsname{0.059}
\expandafter\newcommand\csname pa:zf_fast_pisea_rms_mean\endcsname{0.059}
\expandafter\newcommand\csname pa:zf_fast_pisea_rms_min\endcsname{0.059}
\expandafter\newcommand\csname pa:zf_fast_pisea_v50_max\endcsname{56.2}
\expandafter\newcommand\csname pa:zf_fast_pisea_v50_min\endcsname{56.2}
\expandafter\newcommand\csname pa:zf_fast_ratio\endcsname{28.8}
\expandafter\newcommand\csname pa:zf_slow_macro_excluded\endcsname{0}
\expandafter\newcommand\csname pa:zf_slow_macro_n\endcsname{3}
\expandafter\newcommand\csname pa:zf_slow_macro_pk_max\endcsname{1.44}
\expandafter\newcommand\csname pa:zf_slow_macro_pk_min\endcsname{0.958}
\expandafter\newcommand\csname pa:zf_slow_macro_rms_max\endcsname{0.617}
\expandafter\newcommand\csname pa:zf_slow_macro_rms_mean\endcsname{0.549}
\expandafter\newcommand\csname pa:zf_slow_macro_rms_min\endcsname{0.456}
\expandafter\newcommand\csname pa:zf_slow_macro_v50_max\endcsname{31.4}
\expandafter\newcommand\csname pa:zf_slow_macro_v50_min\endcsname{22.2}
\expandafter\newcommand\csname pa:zf_slow_pisea_excluded\endcsname{0}
\expandafter\newcommand\csname pa:zf_slow_pisea_n\endcsname{3}
\expandafter\newcommand\csname pa:zf_slow_pisea_pk_max\endcsname{0.562}
\expandafter\newcommand\csname pa:zf_slow_pisea_pk_min\endcsname{0.429}
\expandafter\newcommand\csname pa:zf_slow_pisea_rms_max\endcsname{0.059}
\expandafter\newcommand\csname pa:zf_slow_pisea_rms_mean\endcsname{0.053}
\expandafter\newcommand\csname pa:zf_slow_pisea_rms_min\endcsname{0.046}
\expandafter\newcommand\csname pa:zf_slow_pisea_v50_max\endcsname{66.1}
\expandafter\newcommand\csname pa:zf_slow_pisea_v50_min\endcsname{47.1}
\expandafter\newcommand\csname pa:zf_slow_ratio\endcsname{10.4}

\newcommand{\Nm}{N\,m}
\newcommand{\Tid}{T_{\rm id}}

\begin{document}

\title{\LARGE \bf
A Second Torque Port for Series Elastic Actuators:\\
Parallel-Integrated Design and Time-Scale Torque Allocation
}

\author{Donghao Jia$^{1}$, Xiubo Xia$^{1}$, Junhang Liu$^{1}$, Zeyu Zhu$^{1}$,
Xiaoyu Geng$^{1}$ and Jian Sun$^{1,*}$%
\thanks{$^{1}$The authors are with the School of Mechanical Engineering and Automation,
    Beihang University, Beijing 100191, China ({\tt\small donghaojia@buaa.edu.cn};
    $^{*}$corresponding author, {\tt\small buaasunjian@buaa.edu.cn}). This work received no
    institutional funding.}%
\thanks{This work has been submitted to the IEEE for possible publication. Copyright may be
    transferred without notice, after which this version may no longer be accessible.}%
}

\maketitle
\thispagestyle{empty}
\pagestyle{empty}

\begin{abstract}
A series elastic actuator has a single torque port and pays for it twice: the geared motor
must swing its own reflected inertia through the spring, so the amplitude it delivers
collapses as $\omega^{-2}$ in the command frequency $\omega$ once it saturates, while
commands below the transmission's
breakaway friction never arrive at all. This letter opens a second torque port on the load
side, placing a frameless direct-drive micro motor in parallel with a fixed-stiffness
spring---a parallel-integrated SEA, or Pi-SEA, whose delivered torque is read from spring
deflection and micro current without a sensor---and dividing the commanded torque between
the two channels by \emph{time scale} rather than by filter design. The micro torque loop
is the fast subsystem, which makes the closed loop singularly perturbed and turns the
separation the channels need into a bound to check rather than a crossover to tune; a leaky
mid-ranging integrator returns the steady load to the spring; and the amplitude ceiling,
read backwards, becomes a closed-form sizing rule that matches spring, geared motor and
micro motor to the amplitudes and frequencies an application asks for. Against SEAs, the Pi-SEA widens the tracked band at small
amplitudes and lowers the residual the joint imposes on its environment, each by an order
of magnitude.
\end{abstract}

% Keywords are entered in the PaperCept submission form; ieeeconf has no
% \begin{IEEEkeywords} environment and the RA-L template prints none.
% Series elastic actuator, compliant joints, force control, actuator design,
% singular perturbation.

\section{INTRODUCTION}
\label{sec:intro}

Series elastic actuators (SEAs) buy force control with a spring: a
calibrated deflection replaces the torque sensor, the spring bounds what a collision
transmits, and disturbance-observer designs have made the torque loop
accurate~\cite{oh2017hpforce,ugurlu2022benchmark}. The motor's own inertia sends the
bill. Torque reaching the load is first stored in the spring, so the geared motor must
swing its reflected inertia through the deflection the command asks for; the amplitude it
can still deliver therefore falls as \(\omega^{-2}\) in the command frequency \(\omega\)
once its torque saturates. We call
this the \emph{amplitude ceiling}: the large-torque bandwidth limit derived in
Sec.~\ref{subsec:sizing}, of which maximum-torque-transmissibility analysis gives a
closed-loop counterpart~\cite{leeoh2022mtt}, and one that controller gain does not lift. The prototype below,
held to \N{p_tau_m_max_locked}\,\Nm\ on the macro channel by the software clamp under which
the bandwidth campaign was run, delivers a measured \N{ceil_macro_f20_meas}\,\Nm\ at
20\,Hz under that clamp. Softening the spring lowers the ceiling;
stiffening it recovers bandwidth by
removing the compliance the spring exists for~\cite{lee2021codsea}. At the
other end of the amplitude range the transmission that reflects the inertia breaks away at
\N{p_tau_bk}\,\Nm, and commands well below that are not delivered at all. One spring, one
motor, one port---the two failures have one cause, and both grow with the reduction ratio.

The most developed response makes the compliance adjustable: variable-stiffness
actuators reshape the torque--deflection characteristic with a second motor and a dedicated
transmission~\cite{li2024vsareview,yu2025compactvsa}, adjustable-equilibrium designs take
that freedom in discrete steps~\cite{chen2026aepea}, and parallel designs place the spring
alongside a direct-drive motor rather than behind
it~\cite{mathews2023pvsa}. That literature names the
cost: existing designs generally do not let both motors contribute to output
power at once~\cite{zhang2025dmlsvsa}, so the second motor's torque never reaches the load,
while the mechanism adds mass and volume and rate-limits the stiffness
itself~\cite{yu2025compactvsa}. Damping, the other
mechanical knob~\cite{monteleone2022damping,loeffl2025vea}, suppresses the resonance the
ceiling leaves but cannot supply the torque it denies. In all of them the load torque still crosses one elastic port: the ceiling moves
with \(K_s\) but stays.

The control side is equally bounded. Elastic-structure-preserving control and its
robust extension~\cite{keppler2018espi,lee2024resp} and adaptive-robust torque
control~\cite{dai2025artc} enlarge what one motor renders, and the same family bounds it by
the actuator behind the spring~\cite{keppler2021asrlimits}. Singular perturbation has
reached compliant-robot control with the elastic mode as the fast
subsystem~\cite{kim2020spimpedance,shi2025spsynthesis}---a reading of the same
single-actuator plant, not a second source of torque.

The remedy that acts on the cause is a second torque port bypassing the spring.
Parallel-coupled micro--macro actuation~\cite{morrell1998micromacro} and the distributed
macro--mini architecture~\cite{zinn2004dm2,sardellitti2007dm2,shin2010hybrid,kim2017hybrid}
put a small motor at the load beside a large compliant one and let it render what the
macro channel leaves of the command; the pairing has since been built with
brakes~\cite{dills2021hybrid,chaichaowarat2022macromini}, a second geared
motor~\cite{yasuda2014sea2act} and an elastic aerial suspension~\cite{yigit2023macromini},
with its stability under interaction studied~\cite{gosselin2023macromini}. In these designs
one torque is known only indirectly, through a gear or cable, a transducer or a spring
measured away from the load, and three things the designer needs never appear: a
condition, rather than a filter corner, on how far apart in speed the two channels must be;
a bound on the share the fast motor keeps at DC, which mid-ranging provides in process
control~\cite{nowak2024midranging} but redundancy resolution and predictive optimization
do not; and a closed-form size for the small motor from the ceiling the geared macro's
own inertia sets, rather than a design procedure or an optimization over
efficiency and mass~\cite{khorasani2022redundant}.

This letter keeps one fixed-stiffness spring and adds a frameless direct-drive micro motor
coaxial with the joint, stator on the housing and rotor on the link
(Figs.~\ref{fig:concept} and~\ref{structure}), forming a \emph{parallel-integrated SEA}
(Pi-SEA). With no transmission the micro torque is read from its current and the spring
torque from its deflection, so the delivered torque---and with it the torque the joint
exerts on whatever holds the load---is known without a transducer, and the allocation acts
on delivered rather than commanded torque. The micro torque loop is then the fast
subsystem and the spring--load interaction the slow one, so how fast the second actuator
must be is not a filter corner to tune but a bound to check, \(\epsilon^\star\) of
Theorem~\ref{thm:spt_composite}. A leaky mid-ranging law returns the steady load to the
spring with a proved floor, and the amplitude ceiling of the elastic channel, read
backwards, sizes the micro motor in closed form. The scope is one joint, a
regulation-scope bound that depends on the state scaling, and no bandwidth advantage
claimed where the residual exceeds the micro motor's short-term peak.

Section~\ref{sec:spt} develops the allocation and its stability,
Sec.~\ref{sec:prototype} the sizing rule and the prototype, and
Sec.~\ref{sec:validation} the experiments.

\section{DUAL-TIME-SCALE TORQUE ALLOCATION}
\label{sec:spt}

\subsection{Dynamic Model and Time-Scale Separation}
\label{subsec:dynamic_model_spt}

\begin{figure}[t]
\centering
\includegraphics[width=\columnwidth]{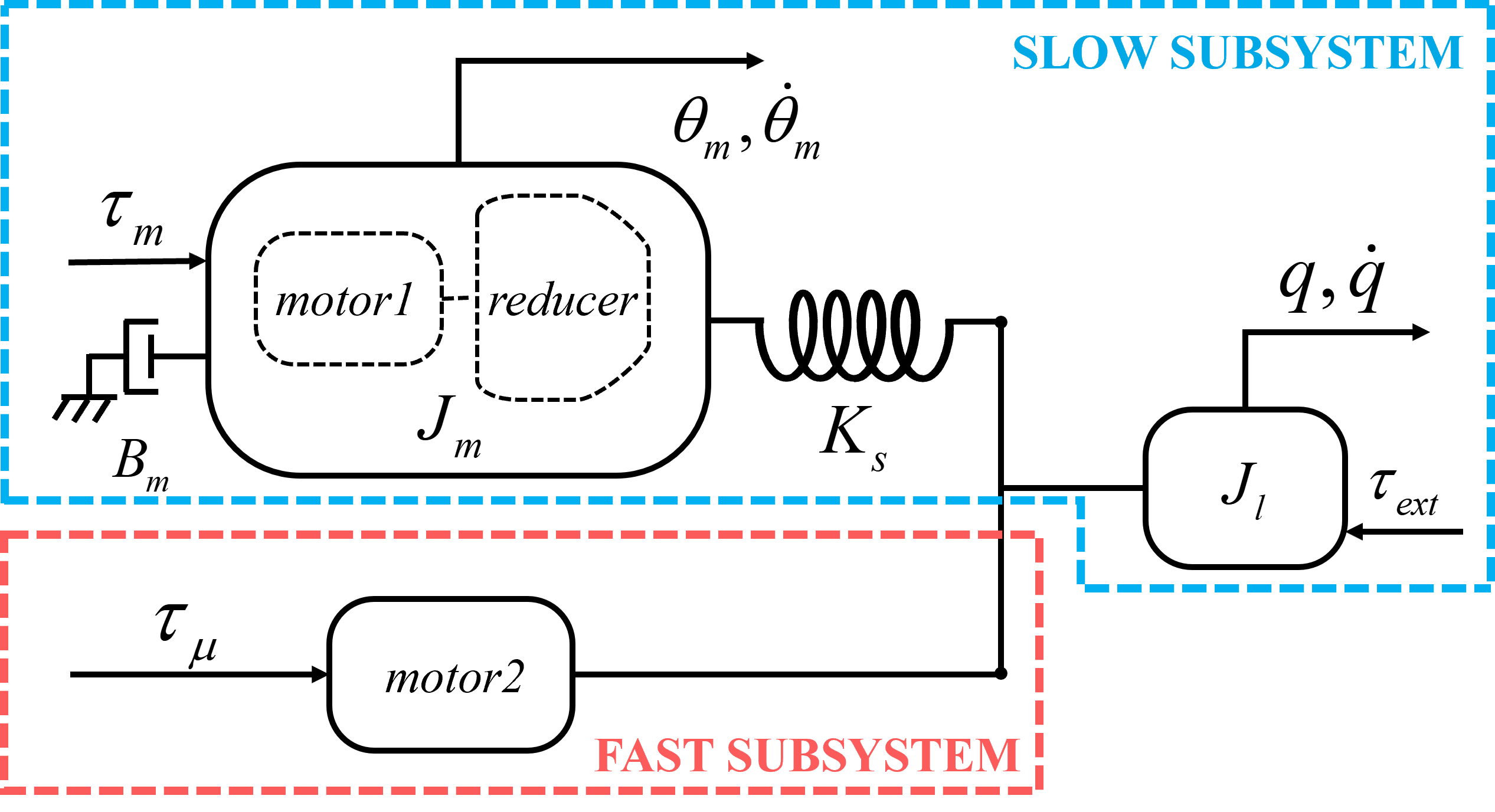}
\caption{Single-joint model of the parallel-integrated SEA. The slow subsystem is the
compliant SEA--load interaction, the fast subsystem the torque loop of the load-side micro
motor, whose torque reaches the load without passing through the spring or the reduction.}
\label{fig:concept}
\end{figure}

Fig.~\ref{fig:concept} shows the single-joint model. With \(\theta_s=\theta_m-q\) the
spring deflection and \(F_s=K_s\theta_s\) the torque it carries, flange and load obey
\begin{equation}
\begin{aligned}
    J_m \ddot{\theta}_m + B_m \dot{\theta}_m &= \tau_m - F_s + d_m , \\
    J_l \ddot q &= F_s + \tau_\mu + \tau_{\rm ext} + d_l ,
\end{aligned}
\label{eq:joint_model}
\end{equation}
with \(\theta_m\), \(q\) the flange and load positions, \(J_m\), \(B_m\) the reflected
motor inertia and damping, \(J_l\) the load inertia, \(\tau_m\), \(\tau_\mu\) the macro and
micro torques, \(\tau_{\rm ext}\) the external torque, and \(d_m\), \(d_l\) the lumped
disturbances, \(d_m\) dominantly friction. Unlike conventional SEA
singular-perturbation analyses, which take the elastic mode as the fast
dynamics~\cite{kim2020spimpedance}, \(\theta_s\) stays a slow state here.

Write \(\omega_s\) and \(\omega_\mu\) for the characteristic bandwidths of the SEA--load
dynamics and of the micro motor's inner torque loop. The micro motor carries no reduction
and no spring, so \(\omega_\mu\gg\omega_s\) by construction, and their ratio
\begin{equation}
    \epsilon=\omega_s/\omega_\mu\ll1
    \label{eq:eps_def}
\end{equation}
is the small parameter of the analysis: the micro torque loop is the fast subsystem, the
SEA--load interaction the slow one. Over the band of interest that inner loop is a
first-order lag,
\begin{equation}
    \frac{\tau_\mu(s)}{\tau_\mu^\star(s)}=\frac{a_\mu}{\epsilon s+a_\mu},
    \qquad
    a_\mu>0,
    \label{eq:micro_fast_model}
\end{equation}
with \(\tau_\mu^\star\) the commanded micro torque and \(a_\mu\) the loop pole written in
the slow time scale, so that \(a_\mu/\epsilon=\omega_\mu\). We call the series-elastic
channel the \emph{macro} actuator and the second motor the \emph{micro} actuator.

\subsection{Force Controller}
\label{subsec:force_controller_spt}

\begin{figure}[t]
\centering
\includegraphics[width=\columnwidth]{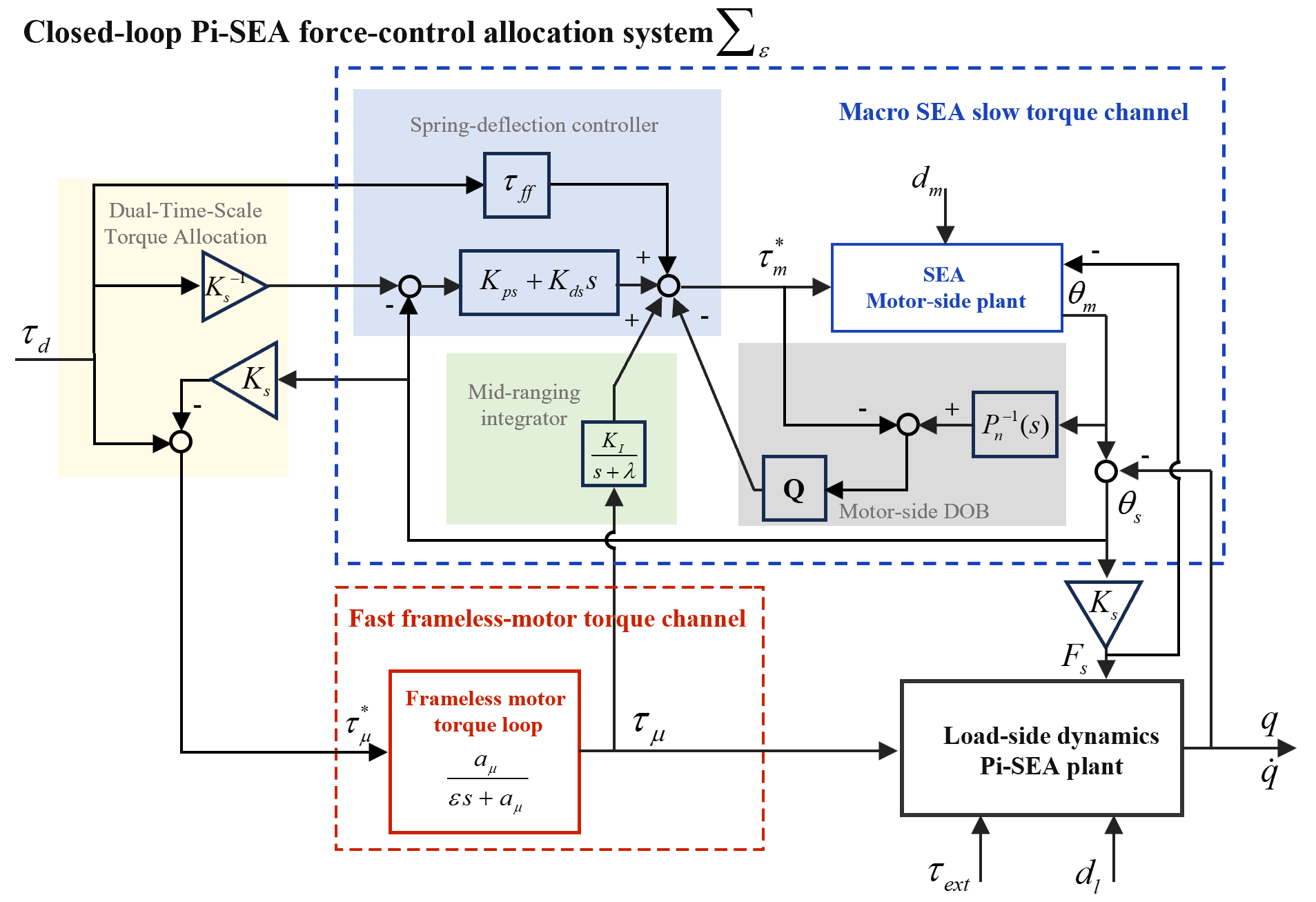}
\caption{Dual-time-scale allocation: the macro channel is force-controlled on the spring
deflection with a leaky mid-ranging integral, the micro channel renders the residual
\(\tau_d-F_s\), and both act on the common load.}
\label{block}
\end{figure}

Let \(\tau_d\) be the commanded load-side torque. The macro channel is force-controlled on
the spring deflection (Fig.~\ref{block}),
\begin{equation}
    \tau_m^\star
    =
    \tau_{\rm ff}
    +
    K_{ps}(\theta_s^d-\theta_s)
    +
    K_{ds}(\dot{\theta}_s^d-\dot{\theta}_s)
    +
    K_I \eta
    -
    \hat d_m ,
    \label{eq:macro_target}
\end{equation}
with the deflection set-point \(\theta_s^d=\tau_d/K_s\), deflection gains
\(K_{ps},K_{ds}>0\) and mid-ranging gain \(K_I>0\). Of the remaining two terms,
\(\hat d_m\) is the output of the motor-side disturbance observer drawn in
Fig.~\ref{block}: following~\cite{oh2017hpforce} it treats the measured spring torque as an
external input, so its nominal inverse plant is \(P_n^{-1}(s)=J_ms^2+B_ms\) and the
residual it leaves is \(\tilde d_m=d_m-\hat d_m\). The feedforward \(\tau_{\rm ff}\) is the
command scaled by the share of the macro torque that reaches the load: unity when the load
is held, and
\begin{equation}
    \tau_{\rm ff}=\Big(1+\frac{J_m}{J_l}\Big)\tau_d
    \label{eq:ff_inertia}
\end{equation}
for a free inertial load, since below the SEA--load resonance the spring is quasi-rigid and
only \(J_l/(J_m+J_l)\) of the macro torque arrives. A gravity term restores the load about
its own equilibrium, so \eqref{eq:ff_inertia} applies only well above the pendulum
frequency \(\sqrt{mgl/J_l}\). The mid-ranging
state is the leaky integral of the micro torque,
\begin{equation}
    \dot\eta=\tau_\mu-\lambda\eta,
    \qquad
    \lambda>0 ,
    \label{eq:mid_ranging_leaky}
\end{equation}
implemented with anti-windup saturation. Through \(K_I\eta\) it transfers whatever the
micro motor carries at low frequency to the macro channel, and the leak bounds it,
\(\|\eta\|_\infty\le\|\eta(0)\|e^{-\lambda t}+\|\tau_\mu\|_\infty/\lambda\), whether or not
the loop reaches a constant equilibrium.

The micro channel renders the residual between the command and what the spring delivers,
\begin{equation}
    \tau_\mu^\star=h(x,\tau_d):=\tau_d-K_s\theta_s .
    \label{eq:micro_target}
\end{equation}
The joint's output torque is \(\Tid=F_s+\tau_\mu\), so
\begin{equation}
    \Tid-\tau_d=\tau_\mu-h(x,\tau_d)=:y ,
    \label{eq:output_error}
\end{equation}
the \emph{output torque error equals the tracking error of the fast channel}: the macro
loop and the mid-ranging integral decide how the delivered torque is \emph{split}, not
whether it is delivered.

Define the slow and fast states
\begin{equation}
    x =
    \begin{bmatrix}
        q & \dot q & \theta_s & \dot{\theta}_s & \eta
    \end{bmatrix}^{T}\!\in\mathbb{R}^5,
    \qquad
    z=\tau_\mu\in\mathbb{R} .
    \label{eq:slow_fast_states}
\end{equation}
The closed loop is then in standard singularly perturbed form,
\begin{align}
    \dot x &= f(x,z,\tau_d),
    \label{eq:spt_slow}\\
    \epsilon \dot z &= -a_\mu z+a_\mu h(x,\tau_d),
    \label{eq:spt_fast}
\end{align}
which separates the slow SEA--load interaction from the fast micro torque tracking.

\subsection{Singular-Perturbation Stability}
\label{subsec:spt_stability}

The analysis is for the nominal closed loop with constant \(\tau_d=\tau_d^\star\),
\(\tilde d_m=0\) and \(d_l=0\); bounded residual disturbances and slowly varying commands
enter as inputs and give local practical stability.

\subsubsection{Reduced and boundary-layer subsystems}
Setting \(\epsilon=0\) in \eqref{eq:spt_fast} gives the slow manifold \(z=h(x,\tau_d)\), on
which \(\tau_\mu=\tau_\mu^\star\) and, by \eqref{eq:output_error}, \(\Tid=\tau_d\).
Substituting into \eqref{eq:spt_slow} yields the reduced system
\begin{equation}
    \Sigma_0:\qquad \dot x=f_r(x,\tau_d):=f(x,h(x,\tau_d),\tau_d),
    \label{eq:reduced_system}
\end{equation}
the SEA--load loop under an ideal micro torque source. With the fast error
\(y=z-h(x,\tau_d)\) and the stretched time \(\sigma=t/\epsilon\), the boundary-layer system
for frozen \(x\) is
\begin{equation}
    \Sigma_b:\qquad \frac{dy}{d\sigma}=-a_\mu y ,
    \label{eq:boundary_layer_system}
\end{equation}
exponentially stable for \(a_\mu>0\); \(q\) stays with the slow dynamics, and only the
convergence of \(\tau_\mu\) to its command is fast.

\subsubsection{Composite stability}
Throughout, \(f\) and \(h\) are continuously differentiable and locally Lipschitz near the
equilibrium, with both actuators away from their torque limits. The one
condition that is not automatic is the stability of the reduced loop.

\begin{assumption}[Reduced slow-subsystem stability]
\label{ass:slow_stable}
For constant \(\tau_d^\star\), the reduced system \eqref{eq:reduced_system} has an
equilibrium \(x^\star\) with \(h(x^\star,\tau_d^\star)=\lambda\eta^\star\), and admits a
Lyapunov function \(V_s(x)\) satisfying the quadratic sandwich and decay conditions
\eqref{eq:Vs_bound_appendix}--\eqref{eq:Vs_decay_appendix} with constants
\(c_1,c_2,\alpha_s>0\). For the linearized macro loop this holds whenever the
Routh--Hurwitz condition \eqref{eq:macro_RH_appendix} on \((K_{ps},K_{ds},K_I,\lambda)\) is
met, which also upper-bounds \(K_I\) (Remark~\ref{rem:KI_tuning}).
\end{assumption}

\begin{theorem}[Local exponential stability]
\label{thm:spt_composite}
Under Assumption~\ref{ass:slow_stable} there exists \(\epsilon^\star>0\), given
explicitly by \eqref{eq:epsilon_star_appendix}, such that the equilibrium
\((x,y)=(x^\star,0)\) of \eqref{eq:spt_slow}--\eqref{eq:spt_fast} is locally
exponentially stable for all \(0<\epsilon<\epsilon^\star\).
\end{theorem}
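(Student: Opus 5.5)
The plan is the standard composite-Lyapunov argument for singularly perturbed systems (Khalil, Thm.~11.4), made quantitative so that \(\epsilon^\star\) comes out explicitly. The proof works in the coordinates \((\tilde x,y)\) with \(\tilde x=x-x^\star\) and \(y=z-h(x,\tau_d^\star)\). Since \(\tau_d\) is constant, differentiating \(y\) along \eqref{eq:spt_slow}--\eqref{eq:spt_fast} gives
\begin{equation*}
\dot{\tilde x}=f_r(x,\tau_d^\star)+\big[f(x,h+y,\tau_d^\star)-f(x,h,\tau_d^\star)\big],
\qquad
\epsilon\dot y=-a_\mu y-\epsilon\,\frac{\partial h}{\partial x}\,f(x,h+y,\tau_d^\star).
\end{equation*}
Here \(h=\tau_d-K_s\theta_s\) is linear, so \(\partial h/\partial x=-K_s e_3^T\) is constant. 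Moreover \(z\) enters \(f\) only affinely, through \(J_l\ddot q\) (as \(+\tau_\mu\)) and through \(\dot\eta\). The interconnection terms are therefore globally linear in \(y\), and this keeps all the constants explicit.

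\textbf{Step 1 (slow and fast bounds).} On a ball \(B_r\) around \((x^\star,0)\) I collect the following constants.
\begin{itemize}
\item From Assumption~\ref{ass:slow_stable}: \(c_1\|\tilde x\|^2\le V_s\le c_2\|\tilde x\|^2\), \(\nabla V_s f_r\le-\alpha_s\|\tilde x\|^2\), and \(\|\nabla V_s\|\le c_3\|\tilde x\|\). For a quadratic \(V_s=\tilde x^TP\tilde x\) one can take \(c_3=2\|P\|\).
\item An interconnection gain \(\|f(x,h+y)-f(x,h)\|\le\beta_1|y|\). One can take \(\beta_1=\|(\partial f/\partial z)\|=\sqrt{J_l^{-2}+1}\), up to the \(\dot\theta_s\) row, which also carries \(-\tau_\mu/J_l\).
\item A bound on the slow vector field, \(\|f(x,h+y)\|\le L_x\|\tilde x\|+\beta_1|y|\), where \(L_x\) is the Lipschitz constant of \(f_r\). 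This uses \(f_r(x^\star)=0\), which is exactly the equilibrium condition \(h(x^\star)=\lambda\eta^\star\) built into the assumption.
\end{itemize}
For the fast part I take \(W(y)=\tfrac12 y^2\), which satisfies \(\dot W\le-\frac{a_\mu}{\epsilon}y^2+K_s|y|\,(L_x\|\tilde x\|+\beta_1|y|)\).

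\textbf{Step 2 (composite function).} Set \(\nu=(1-d)V_s+dW\) with \(d\in(0,1)\). Differentiating gives
\begin{equation*}
\dot\nu\le-\begin{bmatrix}\|\tilde x\|\\|y|\end{bmatrix}^{T}
\begin{bmatrix}(1-d)\alpha_s & -\tfrac12[(1-d)c_3\beta_1+dK_sL_x]\\
-\tfrac12[(1-d)c_3\beta_1+dK_sL_x] & d\big(\tfrac{a_\mu}{\epsilon}-K_s\beta_1\big)\end{bmatrix}
\begin{bmatrix}\|\tilde x\|\\|y|\end{bmatrix}.
\end{equation*}
This matrix is positive definite if and only if
\begin{equation*}
\epsilon<\frac{a_\mu\,d(1-d)\alpha_s}{d(1-d)\alpha_s K_s\beta_1+\tfrac14[(1-d)c_3\beta_1+dK_sL_x]^2}=:\epsilon_d .
\end{equation*}
Maximising over \(d\) gives the explicit \(\epsilon^\star=\sup_d\epsilon_d\). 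Even choosing \(d=\tfrac12\) yields a closed form, which is what I expect \eqref{eq:epsilon_star_appendix} to be.

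\textbf{Step 3 (conclusion).} For \(\epsilon<\epsilon^\star\), \(\nu\) is a quadratic sandwich with \(\dot\nu\le-\gamma\nu\), so the origin of the \((\tilde x,y)\) system is exponentially stable. Since \(y\) and \(z-z^\star\) are related by a bounded linear map, this transfers to \((x,y)=(x^\star,0)\), with the region of attraction being a sublevel set of \(\nu\) inside \(B_r\). The standing assumption that the actuators stay off their limits is what keeps this sublevel set inside the region where \(f\) is smooth. The saturation is then inactive, and the anti-windup never engages.

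\textbf{Main obstacle.} The hard step is Step 1, specifically getting honest, explicit values for \(c_3\), \(\beta_1\) and \(L_x\) from the closed-loop \(f\). The right way is to write \(f\) out using \eqref{eq:joint_model}, \eqref{eq:macro_target} and \eqref{eq:mid_ranging_leaky}, with \(\tilde d_m=0\) and a perfect macro torque \(\tau_m=\tau_m^\star\). The plan is to take the linearization, so \(f=A\tilde x+By\) exactly and \(L_x=\|A_r\|\). Then \(V_s\) is obtained from the Lyapunov equation \(A_r^TP+PA_r=-I\) for the Hurwitz matrix \(A_r\) guaranteed by \eqref{eq:macro_RH_appendix}. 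This makes every constant computable, and it explains the remark that the bound depends on the state scaling.
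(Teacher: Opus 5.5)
Your proposal is correct and follows the paper's proof essentially step for step. Both use the composite function \((1-d)V_s+\tfrac{d}{2}y^2\), the same \(2\times2\) quadratic-form bound and \(V_s\) normalised by \(A_r^TP+PA_r=-I\), and both obtain an \(\epsilon^\star\) of exactly the form \eqref{eq:epsilon_star_appendix}, yours with \(L_1=c_3\beta_1\), \(L_2=K_sL_x\), \(\gamma=K_s\beta_1\). The paper exploits one structural fact you miss: \(h\) depends on \(x\) only through \(\theta_s\), so \((\partial h/\partial x)f=-K_s\dot\theta_s\) contains no \(z\) and the self-coupling is \(\gamma=L_3=0\) exactly, whereas your triangle inequality through \(\|f\|\) keeps \(\gamma>0\) and gives a valid but more conservative \(\epsilon^\star\).
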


For time-varying commands the fast error picks up
\(-\epsilon(\partial h/\partial\tau_d)\dot\tau_d\), so by \eqref{eq:output_error} the
output torque error is proportional to the command rate.

\subsubsection{Steady-state allocation}
At any constant equilibrium \eqref{eq:mid_ranging_leaky} gives \(\dot\eta=0\), hence
\begin{equation}
    \lim_{t\to\infty}\tau_\mu(t)=\lambda\eta^\star ,
    \label{eq:floor}
\end{equation}
and, since \(y\to0\), \(h(x^\star,\tau_d^\star)=\lambda\eta^\star\): the micro motor renders
the transient and sheds the steady load to the spring channel up to a floor proportional
to the leak. Since \(\eta^\star\) is set by the macro channel's stick--slip equilibrium
and its integral gain, not by \(\lambda\), \eqref{eq:floor} predicts a \(\tau_\mu^\star\)
linear in \(\lambda\). Whether that floor can be driven to zero by \(\lambda\to0\) is a
separate question this letter does not settle.

\subsection{Frequency-Domain Reading of the Allocation}
\label{subsec:freq_allocation}

Previous designs specify their split in the frequency
domain~\cite{morrell1998micromacro,zinn2004dm2}; here it is a closed-loop property.
Linearizing about the equilibrium of Theorem~\ref{thm:spt_composite}, with
\(H_m(s)=F_s(s)/\tau_d(s)\) the transfer from command to spring torque,
\begin{equation}
    \frac{\tau_\mu}{\tau_d}=\frac{a_\mu}{\epsilon s+a_\mu}\,\big(1-H_m(s)\big),
    \qquad
    \frac{\Tid}{\tau_d}=H_m+\frac{\tau_\mu}{\tau_d},
    \label{eq:alloc_spectrum}
\end{equation}
so the micro channel is the exact complement of what the spring channel fails to deliver,
low-passed by the micro loop, with DC content \(O(\lambda)\)---the frequency-domain face of
\eqref{eq:floor}. A filter divides the \emph{command}, \eqref{eq:micro_target} the
\emph{delivered} torque, so this crossover migrates to wherever the spring channel stops
delivering.

\section{DESIGN RULE AND PROTOTYPE}
\label{sec:prototype}

\subsection{Design Rule: Spring, Macro and Micro Actuator}
\label{subsec:sizing}

The three hardware quantities of the joint---the spring stiffness \(K_s\), the macro
motor's output-side acceleration \(\tau_m^{\max}/J_m\), and the micro torque
\(T_\mu\)---are tied together by the large-torque envelope of the two channels, without
tuning weights. To deliver a load torque of amplitude \(A\) at frequency \(\omega\)
through the spring, the macro motor must swing its own reflected inertia by \(A/K_s\),
which costs \(AJ_m\omega^2/K_s\) of motor torque; bounded by \(\tau_m^{\max}\), the macro
channel can supply at most \(A_{\rm macro}(\omega)=\tau_m^{\max}K_s/(J_m\omega^2)\), a
\(-40\)\,dB/dec ceiling that no controller gain moves, since it is the torque the command
asks of the motor whatever the control law. This is why a
modest output specification buys a large joint: at 20\,Hz the \(\omega^2\) factor is
\(1.6\times10^4\), so a few \Nm\ at the load consumes a 100:1 joint's whole peak torque,
nearly all of it turning the reflected inertia round. The micro
channel drives the load directly and contributes a flat \(T_\mu\). Requiring their sum to
cover the specification gives
\begin{equation}
    T_\mu \;\ge\; A-\frac{K_s}{\omega^{2}}\,\frac{\tau_m^{\max}}{J_m} ,
    \label{eq:sizing_rule}
\end{equation}
in which the figure of merit \(\tau_m^{\max}/J_m\) degrades as \(1/N\) in the reduction
ratio (\(\tau_m^{\max}\propto N\), \(J_m\propto N^2\)), the penalty that motivates
quasi-direct-drive actuation~\cite{wensing2017proprioceptive}. Only the \(\omega^2\) term
is kept, so \eqref{eq:sizing_rule} is the asymptote above the coupled spring--load
resonance \(\omega_{\rm res}=\sqrt{K_s(J_m^{-1}+J_l^{-1})}\); past
\(\omega_{\rm res}/\sqrt2\) it overstates what the command asks of the macro motor, and so
sizes \(T_\mu\) conservatively. Read the other way,
\eqref{eq:sizing_rule} is the force bandwidth a given hardware set reaches at amplitude
\(A\),
\begin{equation}
    f_{\max}(A)=\frac{1}{2\pi}\sqrt{\frac{\tau_m^{\max}K_s}{J_m\,(A-T_\mu)}}
    \quad\text{for } A>T_\mu ,
    \label{eq:f_of_A}
\end{equation}
while for \(A\le T_\mu\) the micro loop bandwidth \(\omega_\mu\) sets the reach. The
regimes meet at \(\omega_{\rm cross}=\sqrt{\tau_m^{\max}K_s/(J_mT_\mu)}\): below it
\eqref{eq:sizing_rule} sizes the micro motor, above it the largest command the micro motor
carries alone does.

\subsection{Specification, Selection and Prototype}
\label{subsec:selection}

Voluntary motion in daily activities lies below 2\,Hz and pathological tremor concentrates
in 3--12\,Hz, with components reported up to
17.3\,Hz~\cite{nguyen2021tremororthosis}. The joint's own design target is
\(A=\N{surf_A_spec}\)\,\Nm\ at \N{surf_f_spec}\,Hz, above that band. The macro joint is a
rigid integrated unit, harmonic drive 100:1, rated \N{p_design_tau_max}\,\Nm\ continuous
and \N{p_tau_max_peak}\,\Nm\ peak, data-sheet reflected inertia
\N{p_design_J_m}\,kg\,m$^2$, with a spring designed for \N{p_design_K_s}\,\Nm/rad.
\eqref{eq:sizing_rule} is a large-torque envelope held for the duration of a command, so it
is evaluated at the peak throughout this letter, and it asks for
\(T_\mu\ge\N{des_T_mu_req}\)\,\Nm; a frameless torque motor, MOSRAC U9424DJA01 rated
\N{p_design_T_mu_cont}/\N{p_design_T_mu_peak}\,\Nm\ continuous/peak at
\(k_t=\N{p_design_Kt}\)\,\Nm/A, meets it with a \N{des_margin_peak}\(\times\) margin. Over
the \((K_s,\tau_m^{\max}/J_m)\) plane, \eqref{eq:sizing_rule} becomes the design surface of
Fig.~\ref{fig:surface}, which makes the same selection for any joint.

\begin{figure}[t]
\centering
\includegraphics[width=0.78\columnwidth]{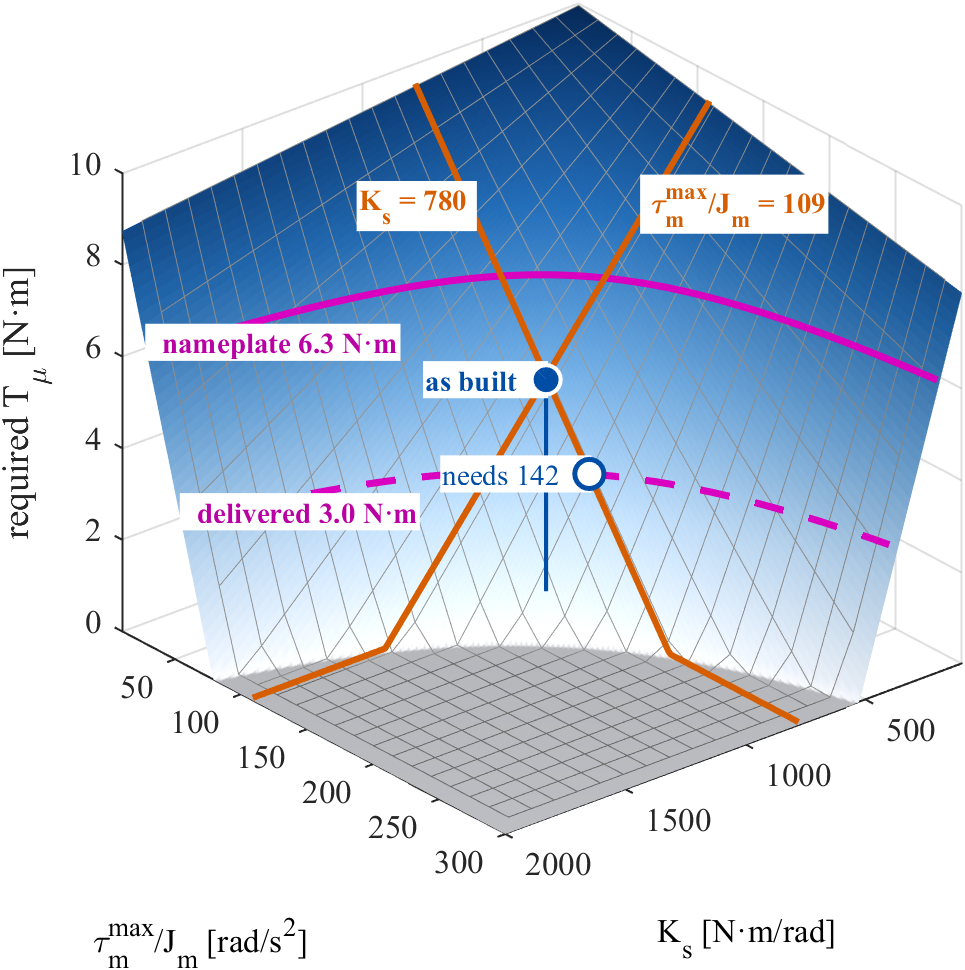}
\caption{The sizing rule \eqref{eq:sizing_rule} as a design surface for the
\N{surf_A_spec}\,\Nm\ at \N{surf_f_spec}\,Hz specification. Height and colour are the micro
torque \(T_\mu\) the rule requires at each \((K_s,\tau_m^{\max}/J_m)\), falling to zero
where the macro channel already covers the specification alone; a motor serves everything
\emph{below} its own magenta iso-line. The orange curves section the surface at the
as-built joint; the shortfall they show is a prediction at the joint's peak torque,
quantified in the text.}
\label{fig:surface}
\end{figure}

\begin{figure*}[t]
\centering
\begin{minipage}[b]{0.403\textwidth}\centering
\includegraphics[height=1.95in]{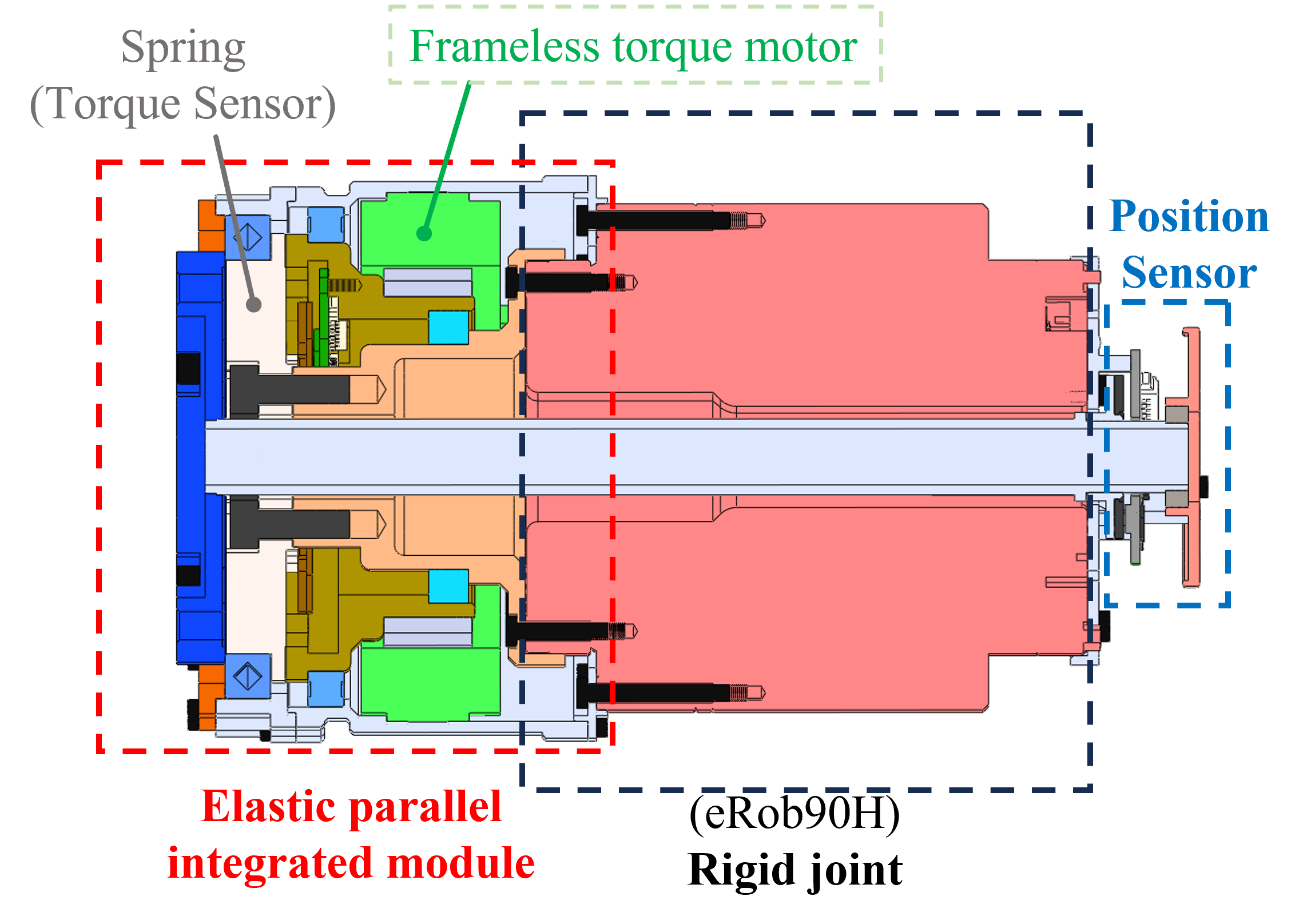}\\[1pt]{\footnotesize (a)}
\end{minipage}\hfill
\begin{minipage}[b]{0.590\textwidth}\centering
\includegraphics[height=1.95in]{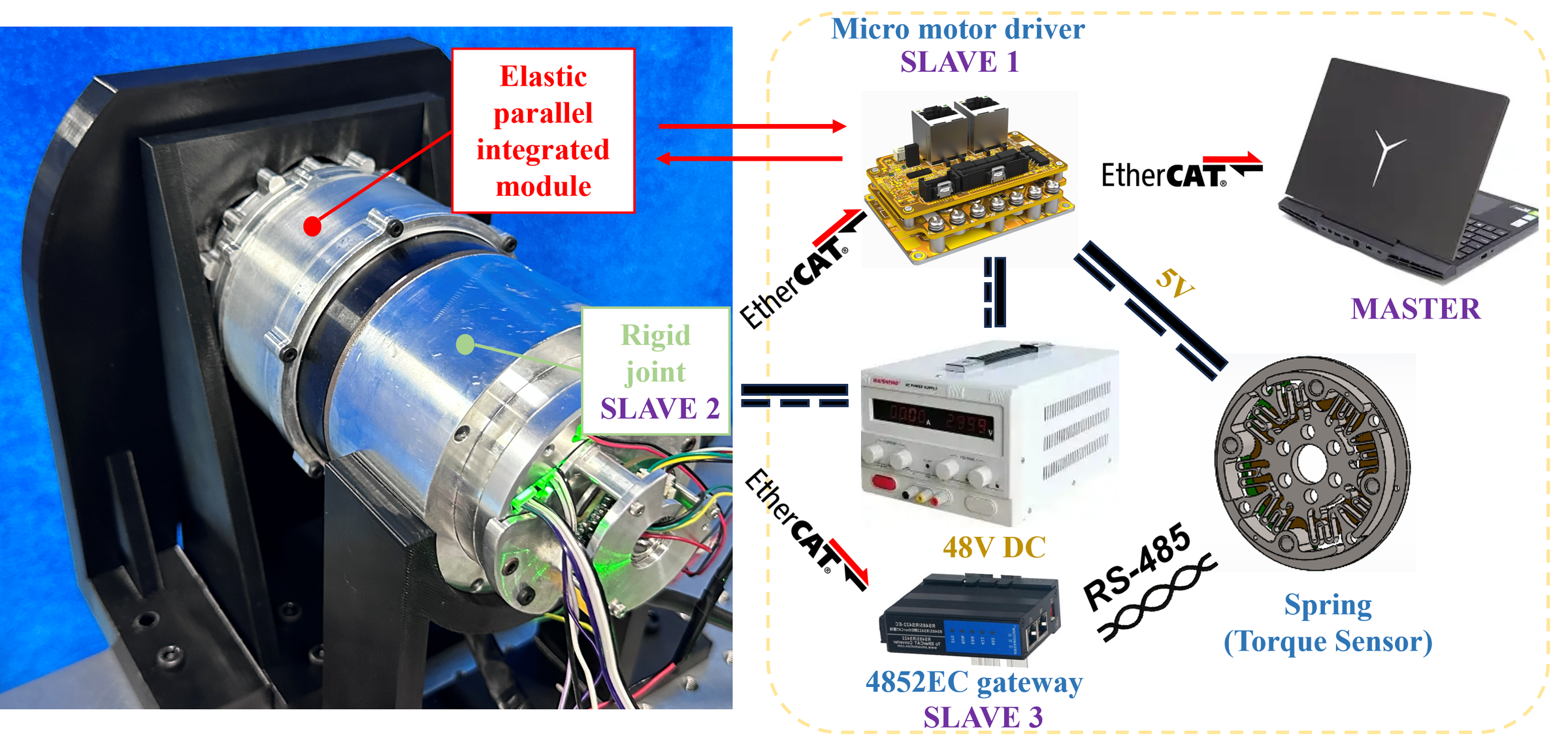}\\[1pt]{\footnotesize (b)}
\end{minipage}
\caption{The prototype as built. (a) Cross-section: the torsion spring (\(K_s\)) and the
frameless micro motor (\(\tau_\mu\)) act in parallel on the load, and an absolute encoder
across the spring measures \(\theta_s\). (b) The assembled joint and its bench. One
EtherCAT master closes both loops at \N{p_fs}\,Hz over three slaves; the spring doubles as
the torque sensor, so no external transducer is in the loop.}
\label{structure}
\end{figure*}

The prototype (Fig.~\ref{structure}) mounts the parallel module on the joint flange: a
torsion spring connects the flange to the link, and the frameless motor, stator on the
housing and rotor on the link, injects \(\tau_\mu\) without a transmission. \(\theta_m\)
and \(q\) come from the joint and micro-rotor encoders. Identified parameters are listed in
Table~\ref{tab:params}; the micro drive's \(I^2t\) protection holds the usable micro torque
to \N{p_T_mu_cont}/\N{p_T_mu_1s}\,\Nm\ continuous/peak. The as-built point of
Fig.~\ref{fig:surface} therefore sits above the delivered iso-line, missing the
specification by a factor \N{surf_acc_ratio} on \(\tau_m^{\max}/J_m\): at the joint's
\N{p_tau_max_peak}\,\Nm\ peak the rule predicts \N{surf_A_reach}\,\Nm\ at \N{surf_f_spec}\,Hz,
or 10\,\Nm\ up to \N{blt_f_reach_3_peak}\,Hz, while the campaign below runs under a
\N{p_tau_m_max_locked}\,\Nm\ clamp. Its crossover \(\omega_{\rm cross}/2\pi=\N{f_cross_Tmu3}\)\,Hz
lies beyond the 15--16\,Hz at which the closed macro loop runs out of phase, so the micro
motor works in the small-amplitude regime.

\begin{table}[t]
\centering
\caption{Parameters identified on the assembled prototype and the gains and load
conditions used.}
\label{tab:params}
\footnotesize
\setlength{\tabcolsep}{3.5pt}
% Generated by tools/paperA_figures.m.  Do not edit by hand.
\begin{tabular}{@{}llll@{}}
\toprule
Symbol & Value & Symbol & Value \\
\midrule
$J_m$ [kg\,m$^2$] & 1.56 & $K_{ps}$ [N\,m/rad] & 8000 \\
$B_m$ [N\,m\,s/rad] & 12.2 & $K_{ds}$ [N\,m\,s/rad] & 160 \\
$\tau_b$ [N\,m] & 9.0 & $K_I$, $\lambda$ [1/s] & 10, 0.1 \\
$K_s$ [N\,m/rad] & 780 & $N_d$, $\omega_Q$ [rad/s] & 300, 50 \\
$k_t$ (meas.) [N\,m/A] & 0.105 & $J_l$ bar / link [kg\,m$^2$] & 0.071 / 0.30 \\
$\omega_\mu/2\pi$ [Hz] & 745 & $a_\mu$ [rad/s] & 112 \\
$T_\mu$ cont./peak [N\,m] & 1.7/3.0 &  &  \\
\bottomrule
\end{tabular}

\end{table}

\section{EXPERIMENTAL VALIDATION}
\label{sec:validation}

\subsection{Setup, Baseline and Metrics}
\label{subsec:exp_setup}

All experiments use Table~\ref{tab:params} and the \N{p_tau_m_max_locked}\,\Nm\ macro clamp
unless another is named. The macro-only baseline is the same controller with the micro
motor disabled and the gains unchanged, so the two differ in nothing but the second
channel: with \(\tau_\mu\equiv0\) the allocation degenerates to a feed-forward,
deflection-PD and disturbance-observer force loop of the family benchmarked
in~\cite{oh2017hpforce,ugurlu2022benchmark}. A second baseline replaces
\eqref{eq:micro_target} by a fixed first-order split of the \emph{command}, cornered at
2\,Hz or at \N{alloc_model_fx}\,Hz. The feed-forward \eqref{eq:ff_inertia} is used on the
free bar and at 4\,Hz on the gravity link, \(\tau_{\rm ff}=\tau_d\) below the link's
\N{link_pend_hz}\,Hz pendulum. No external torque sensor is used: the delivered torque is
\(\Tid=K_s\theta_s+\tau_\mu\), with \(\tau_\mu\) the measured current times the measured
\(k_t\) of Table~\ref{tab:params}, read back \N{micro_delay_ms}\,ms after the command over
EtherCAT, from the readback phase at 20\,Hz; the 1\,kHz bus schedule puts about two cycles of
that before the torque acts and two after it, so the logged \(\Tid\) trails the physical
torque by about 2\,ms, \(7^\circ\) at 10\,Hz. A command is
\emph{tracked} where \(|1-\Tid/\tau_d|\le0.5\), a \(29^\circ\) phase lag at unity gain. The accompanying video shows the hand-push and gravity-link runs.

\subsection{Time-Scale Separation: \(\epsilon\) against \(\epsilon^\star\)}
\label{subsec:exp_eps}

The two time scales are \(\omega_\mu/2\pi=\N{p_omega_mu_hz}\)\,Hz for the micro torque loop,
its \(-45^\circ\) point measured on a jig, and \(\omega_s/2\pi=\N{f_s_meas}\)\,Hz for the SEA--load resonance, read from the free-bar
chirp of Sec.~\ref{subsec:exp_free} against \N{f_s_model}\,Hz from the identified
parameters, so \(\epsilon=\N{eps}\); the clamped loop's dominant poles, at
\N{f_clamped_dom}\,Hz, give a smaller \(\epsilon=\N{eps_clamped}\), so this choice is
conservative. The EtherCAT lag of Sec.~\ref{subsec:exp_setup} is a transport delay outside
the first-order model \eqref{eq:micro_fast_model} and does not enter \(\epsilon\). Evaluated
for the clamped load---the case in which
Assumption~\ref{ass:slow_stable} holds without an added load damping term
(Remark~\ref{rem:KI_tuning})---in the torque-equivalent state
\((K_s\theta_s,\,K_s\dot\theta_s/\omega_s,\,K_I\eta)\),
the appendix expression \eqref{eq:epsilon_star_appendix} gives
\(\epsilon^\star=\N{epsstar_torque_best}\). The
prototype therefore runs at \(\epsilon/\epsilon^\star=\N{eps_ratio_torque}\), inside
Theorem~\ref{thm:spt_composite} in these coordinates.

\subsection{Force Bandwidth and Allocation with the Load Clamped}
\label{subsec:exp_locked}

\begin{figure*}[tb]
\centering
\includegraphics[width=0.95\textwidth]{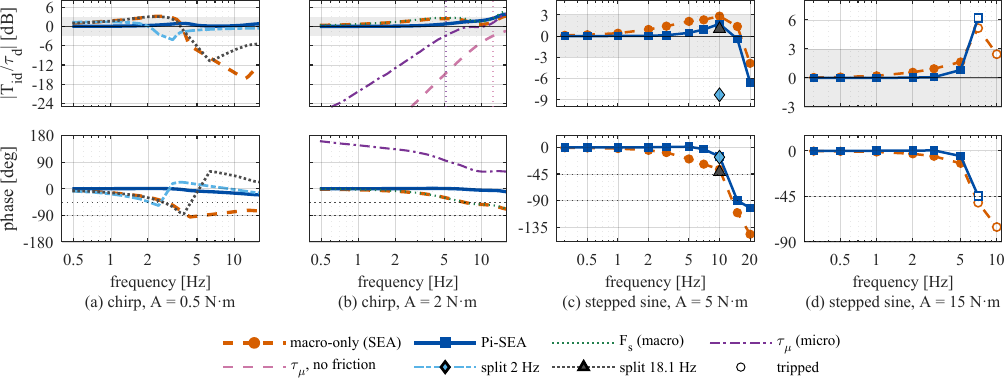}
\caption{Closed-loop \(\Tid/\tau_d\), load clamped, identical gains. (a) 0.5\,\Nm\ and
(b) 2\,\Nm\ chirps, 0.3--20\,Hz, drawn where the coherence exceeds 0.5; (a) adds the fixed
command split at 2 and 18.1\,Hz, (b) the Pi-SEA split into \(F_s\) and \(\tau_\mu\), with
\(\tau_\mu\) under the frictionless model dashed and both \(-3\)\,dB points dotted---the gap
is the allocation responding to delivered torque, which no fixed filter of \(\tau_d\) does.
(c) 5\,\Nm, with the splits at 10\,Hz, and (d) 15\,\Nm\ stepped sines; hollow markers were
cut short by the spring-energy guard.}
\label{fig:frf_locked}
\end{figure*}

\begin{figure}[t]
\centering
\includegraphics[width=\columnwidth]{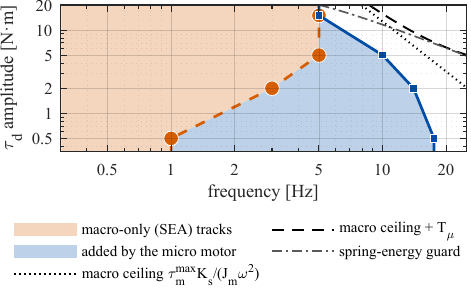}
\caption{Where each controller tracks the command (\(|1-\Tid/\tau_d|\le0.5\)) with the load
clamped, under the \(\tau_m^{\max}=\N{p_tau_m_max_locked}\)\,\Nm\ software clamp of this
campaign, not the joint's \N{p_tau_max_peak}\,\Nm\ peak. Orange, the region the macro-only
loop already tracks; blue, the band the micro motor adds; white, neither. Markers are the
four measured amplitudes, the 0.5\,\Nm\ Pi-SEA edge from stepped sines past the chirp; the
boundaries between them are interpolated. Dotted, the macro
amplitude ceiling \(\tau_m^{\max}K_s/(J_m\omega^2)\) of \eqref{eq:sizing_rule}; dashed,
that ceiling plus \(T_\mu\); dash--dot, the spring-energy guard of the test rig, which is
what stops the 15\,\Nm\ row at 7\,Hz.}
\label{fig:plane}
\end{figure}

Three regimes appear (Figs.~\ref{fig:frf_locked} and~\ref{fig:plane}), divided by the macro
loop's linear window \(A=\tau_bK_s/K_{ps}=\N{A_lin_window}\)\,\Nm, below which its
proportional term cannot reach the breakaway torque. Above the window, at 2 and 5\,\Nm,
both loops hold unity gain and what the second port buys is \emph{phase}: at 10\,Hz from
\N{frf_a2c_macro_ph10}$^\circ$ to \N{frf_a2c_pisea_ph10}$^\circ$ and from
\N{ss_a5_macro_ph10}$^\circ$ to \N{ss_a5_pisea_ph10}$^\circ$, widening the tracked band
from \N{frf_a2c_macro_ftrack} to \N{frf_a2c_pisea_ftrack}\,Hz and from
\N{ss_a5_macro_ftrack} to \N{ss_a5_pisea_ftrack}\,Hz. That phase is lost accelerating
\(J_m\) through the spring and no gain returns it. By the \(-3\)\,dB criterion the loops are
alike there, neither falling 3\,dB within the chirp's coherent band at 2\,\Nm\ and both
holding to 15\,Hz at 5\,\Nm; the tracked band differs by phase alone. Below the window the
second port buys \emph{authority} instead,
and buys the most there: at 0.5\,\Nm\ the tracked edge moves from
\N{frf_a05c_macro_ftrack}\,Hz to between \N{a05s_edge_lo} and \N{a05s_edge_hi}\,Hz, and
the \(-3\)\,dB point from \N{frf_a05c_macro_f3db}\,Hz to beyond that band, limited
by the micro path's \N{micro_delay_ms}\,ms lag, which \(\Tid\), built from the current
readback, cannot separate from a reporting delay. At 15\,\Nm\ it buys nothing, and none is
claimed: both controllers stop at \N{ss_a15_macro_ftrip}\,Hz on the rig's spring-energy
guard.

Two comparisons close the section. Against a fixed command split at identical gains,
Pi-SEA tracks the whole 0.5\,\Nm\ chirp where the 2 and \N{alloc_model_fx}\,Hz splits track
\N{share_a05c_filt2} and \N{share_a05c_filt18}\,\% of it---a filter divides the command and
loses wherever the macro channel does not deliver its share, which is also why the micro
channel here is loaded from \N{alloc_f3dB_mu}\,Hz against \N{alloc_model_f3dB_mu}\,Hz for
the frictionless model of \eqref{eq:alloc_spectrum}. Repeating the large-amplitude points
with the macro clamp at 50 and 75\,\Nm\ confirms the ceiling law itself: the delivered
spring torque scales with the clamp and falls as \(\omega^{-2}\), and
\eqref{eq:sizing_rule} with its spring term restored fits all \N{ceil_npts} points within
\(\N{ceil_spring_err_min}\) to \(+\N{ceil_spring_err_max}\)\,\%.

\subsection{Mid-Ranging}
\label{subsec:exp_midranging}

\begin{figure}[t]
\centering
\includegraphics[width=\columnwidth]{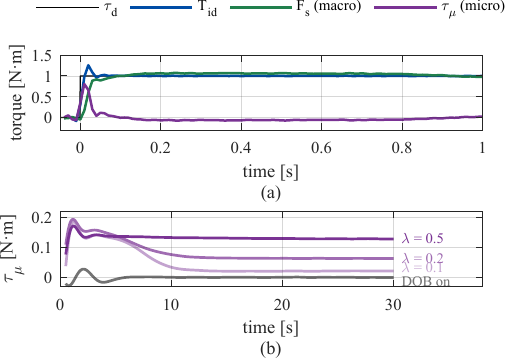}
\caption{Leaky mid-ranging, load clamped. (a) 1\,\Nm\ step with \(K_I=10\): the micro motor
carries the edge and the spring channel takes the load over. (b) Micro torque after the same
step with the DOB off for \(\lambda=0.1,0.2,0.5\)\,s$^{-1}$ (purple) and with the DOB on for
\(\lambda=0.1\) (grey).}
\label{fig:midranging}
\end{figure}

\begin{figure*}[tb]
\centering
\includegraphics[width=0.95\textwidth]{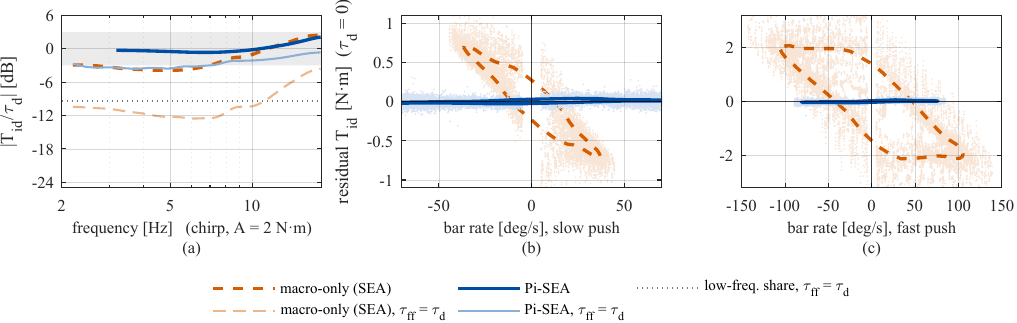}
\caption{Free symmetric bar (\(J_l=\N{p_J_l_bar}\)\,kg\,m$^2$). (a) Closed-loop gain
\(|\Tid/\tau_d|\) of the 2\,\Nm\ chirp with \eqref{eq:ff_inertia} (dark) and with
\(\tau_{\rm ff}=\tau_d\) (light); dotted, the low-frequency share of the latter. (b), (c) Residual output torque under \(\tau_d=0\) while the bar is pushed by
hand, slow and fast, against the bar rate: raw samples faint, fitted loops dark (median
torque in equal-count rate bins on each branch of the stroke).}
\label{fig:free}
\end{figure*}

Mid-ranging shortens the hand-over of a 1\,\Nm\ step from the micro motor to the spring
from \N{mr_takeover_ms_KI0} to \N{mr_takeover_ms_KI10}\,ms (to 10\,\% of peak) and leaves it a steady load of
\N{mr_Bsq3DOB50lam01_taumu_ss}\,\Nm\ (Fig.~\ref{fig:midranging}), the disturbance
observer having removed the friction the integral would otherwise carry.

Switching the observer off exposes the floor \eqref{eq:floor} itself: the micro
torque then settles at \N{mr_Bsq2noDOBlam01_taumu_ss}, \N{mr_Bsq2noDOBlam02_taumu_ss} and
\N{mr_Bsq2noDOBlam05_taumu_ss}\,\Nm\ for \(\lambda=0.1\), 0.2 and 0.5\,s$^{-1}$, against
the predicted \(\lambda\eta^\star\) of \N{mr_Bsq2noDOBlam01_lam_eta},
\N{mr_Bsq2noDOBlam02_lam_eta} and \N{mr_Bsq2noDOBlam05_lam_eta}\,\Nm. The three
\(\eta^\star\) agree to \N{mr_eta_star_min}--\N{mr_eta_star_max}\,\Nm\,s: the floor is
linear in \(\lambda\) at a slope set by the stick torque and \(K_I\). 

\subsection{Free Bar: Authority on a Light Load}
\label{subsec:exp_free}

On a free bar only \(r=J_l/(J_m+J_l)=\N{free_ratio}\) of the macro torque reaches the load,
so the macro-only loop delivers just \N{free_fws2c_macro_gmid} of a 2\,\Nm\ command over
2--10\,Hz. The feed-forward \eqref{eq:ff_inertia} recovers most of that deficit and the
micro motor, acting on the bar directly, closes the rest: Pi-SEA delivers
\N{free_fws2c_pisea_ff_g310} over 3--10\,Hz (Fig.~\ref{fig:free}(a)).

\subsection{Zero-Torque Command and Torque Tracking on a Gravity Link}
\label{subsec:exp_tracking}

Under \(\tau_d=0\) the residual \(\Tid\) is what the joint imposes on a hand that moves it.
The second port cuts it \N{zf_slow_ratio}-fold on slow pushes, from
\N{zf_slow_macro_rms_min}--\N{zf_slow_macro_rms_max} to
\N{zf_slow_pisea_rms_min}--\N{zf_slow_pisea_rms_max}\,\Nm\ rms, and by a larger factor on
fast ones (Fig.~\ref{fig:free}(b),(c)). What remains sits at the stroke reversals, where
the macro channel turns its inertia round against its own friction.

\begin{figure*}[tb]
\centering
\includegraphics[width=0.95\textwidth]{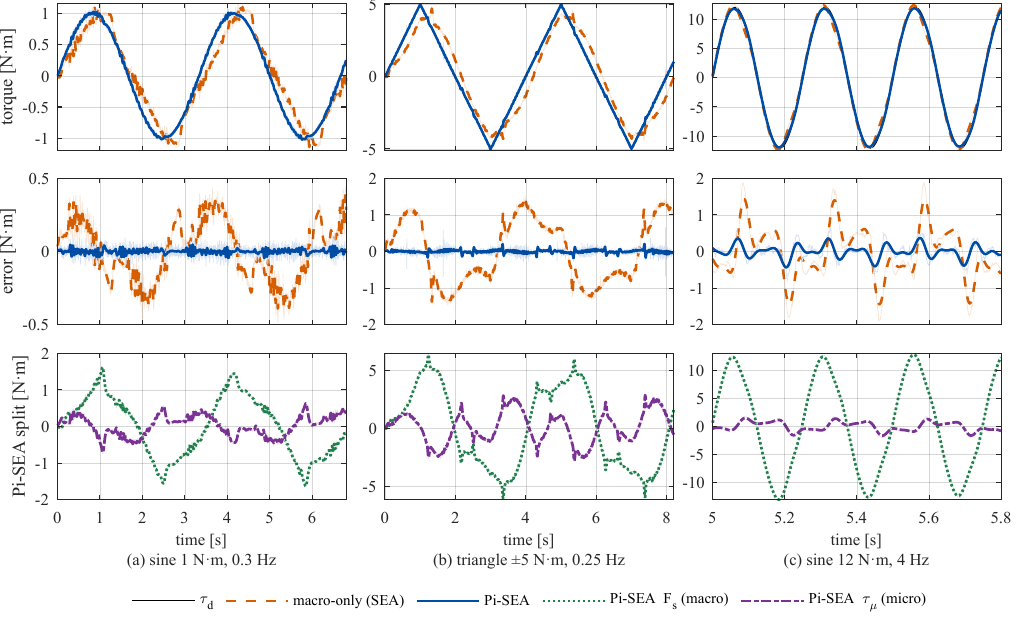}
\caption{Torque tracking on the gravity link (bar with 2.5\,kg at 0.3\,m, hanging start):
(a) sine 1\,\Nm\ at 0.3\,Hz, (b) triangle \(\pm5\)\,\Nm\ at 0.25\,Hz, (c) sine 12\,\Nm\ at
4\,Hz with \eqref{eq:ff_inertia}. Rows: command and delivered torque; error \(\tau_d-\Tid\) (raw faint, filtered
dark); Pi-SEA split into \(F_s\) and \(\tau_\mu\). One run of each controller is shown.}
\label{fig:track}
\end{figure*}

On the gravity link the second port cuts the rms torque error \N{tr_FS1_ratio}-fold on a
1\,\Nm\ sine at 0.3\,Hz and \N{tr_FS2_ratio}-fold on a \(\pm5\)\,\Nm\ triangle at
0.25\,Hz---the waveform whose constant-slew zero crossing exposes the macro dead band
(Fig.~\ref{fig:track}). On a 12\,\Nm\ sine at 4\,Hz, where \eqref{eq:ff_inertia} is active
and the macro channel already carries its share, the remaining gain is
\N{tr_FS3_ratio}-fold.

\section{CONCLUSION}
\label{sec:conclusion}

This letter has presented a complete design and control treatment of a joint that places a
direct-drive micro motor in parallel with a series-elastic macro channel. Allocating by
time scale rather than by filter turns the separation the two channels need into a bound
that can be checked against the hardware; a leaky mid-ranging law returns the steady load
to the spring and leaves the fast channel its range; and the amplitude ceiling of the
elastic channel, read backwards, sizes the micro motor in closed form. Experiments on a
single joint support all three: against a macro-only baseline at identical gains the second
port restores the phase the spring path loses, opens a tracked band below the breakaway
window where the macro channel delivers nothing, and cuts the zero-torque interaction
residual by an order of magnitude. What differs from earlier macro--mini pairings is that
the crossover is a closed-loop property rather than a filter to be designed, and that the
micro motor bypasses the elastic path variable-stiffness designs instead reshape, and that
both torques are read where they act, the spring's from its deflection and the micro's
from its current, so the joint carries no torque sensor. The cost
is a second drive and its thermal budget, the micro rotor itself adding negligible inertia;
that budget is also what bounds the gains at large amplitude. The limits are one joint and
a regulation-scope bound that depends on the state scaling; the coupled multi-DoF case,
stability in contact with an environment, and a thermal model in place of the fixed clamp
are left as future work.

\appendix
\section{SINGULAR-PERTURBATION CONDITIONS AND COMPOSITE STABILITY PROOF}
\label{app:spt_proof}

The Lyapunov conditions of Assumption~\ref{ass:slow_stable} are: there exist a
continuously differentiable \(V_s(x)\) and constants \(c_1,c_2,\alpha_s>0\) with
\begin{align}
    c_1\|x-x^\star\|^2
    &\le
    V_s(x)
    \le
    c_2\|x-x^\star\|^2,
    \label{eq:Vs_bound_appendix}\\
    \frac{\partial V_s}{\partial x}f_r(x,\tau_d^\star)
    &\le
    -\alpha_s\|x-x^\star\|^2 .
    \label{eq:Vs_decay_appendix}
\end{align}

\begin{remark}[Practical tuning condition]
\label{rem:KI_tuning}
Neglecting \(\tilde d_m\) and treating the load-side motion as a bounded slow signal
(exactly so when clamped), the deflection loop with the mid-ranging integral has the cubic
characteristic polynomial \(J_ms^3+(B_m+K_{ds}+J_m\lambda)s^2+
\big(K_s+K_{ps}+\lambda(B_m+K_{ds})\big)s+K_IK_s+\lambda(K_{ps}+K_s)\), whose coefficients
are all positive for \(\lambda,K_I>0\), so Routh--Hurwitz reduces to
\begin{equation}
    K_I
    <
    \frac{(B_m+K_{ds})\big[K_{ps}+K_s+\lambda(B_m+K_{ds})+\lambda^2J_m\big]}
         {J_mK_s} ,
    \label{eq:macro_RH_appendix}
\end{equation}
The leak relaxes the bound and keeps \(\eta\) bounded. For a free load the reduced system has the micro motor supplying
\(\tau_d-F_s\) exactly, so the macro loop no longer damps \(q\) and the assumption requires
a damped or gravity-restored load; the bound is evaluated for the clamped load in
Sec.~\ref{subsec:exp_eps}.
\end{remark}

\begin{IEEEproof}
The construction is the standard composite-Lyapunov
one~\cite[Thm.~11.4]{khalil2002nonlinear}. With \(\tilde x=x-x^\star\), \(y=z-h(x,\tau_d^\star)\) and
\(\Delta f=f(x,y+h,\tau_d^\star)-f(x,h,\tau_d^\star)\), Lipschitz continuity gives
\(L_1>0\) with
\begin{equation}
    \frac{\partial V_s}{\partial x}\Delta f(x,y)
    \le
    L_1\|\tilde x\||y| .
    \label{eq:L1_appendix}
\end{equation}
In the original time scale the fast error obeys
\(\epsilon\dot y=-a_\mu y-\epsilon(\partial h/\partial x)f(x,y+h,\tau_d^\star)\), and local
boundedness provides \(L_2,L_3>0\) with
\begin{equation}
    -y
    \frac{\partial h}{\partial x}
    f(x,y+h,\tau_d^\star)
    \le
    L_2\|\tilde x\||y|+L_3y^2 .
    \label{eq:L23_appendix}
\end{equation}
\(L_1,L_2\) measure how the micro torque error feeds back into the SEA--load motion
through \(h\), \(L_3\) its self-coupling. With \(h=\tau_d-K_s\theta_s\),
\(\partial h/\partial x\) is the constant row \([0\;0\;{-K_s}\;0\;0]\); since \(\tau_\mu\)
drives only \(\eta\), on which \(h\) does not depend, \(L_3=(\partial h/\partial x)B=0\)
exactly. Normalising \(V_s\) by \(A_r^{T}P+PA_r=-I\) fixes \(\alpha_s=1\), and
\(a_\mu=\omega_s\) makes \(a_\mu/\epsilon=\omega_\mu\); the torque-equivalent scaling gives
\(L_1=\N{epsstar_torque_L1}\), \(L_2=\N{epsstar_torque_L2}\). Taking \(V_c=(1-d)V_s(x)+\tfrac{d}{2}y^2\), \(d\in(0,1)\),
and using \eqref{eq:Vs_decay_appendix}, \eqref{eq:L1_appendix} and \eqref{eq:L23_appendix},
\begin{equation}
    \dot V_c
    \le
    -
    \begin{bmatrix}
        \|\tilde x\|\\
        |y|
    \end{bmatrix}^{T}
    \begin{bmatrix}
        (1-d)\alpha_s & -\tfrac{1}{2}\beta\\[2pt]
        -\tfrac{1}{2}\beta & d\left(\tfrac{a_\mu}{\epsilon}-\gamma\right)
    \end{bmatrix}
    \begin{bmatrix}
        \|\tilde x\|\\
        |y|
    \end{bmatrix},
    \label{eq:Vc_dot_bound_appendix}
\end{equation}
with \(\beta=(1-d)L_1+dL_2\) and \(\gamma=L_3\). The matrix is positive definite precisely
when \((1-d)\alpha_s\,d\,(a_\mu/\epsilon-\gamma)>\tfrac14\beta^2\), that is, for
\begin{equation}
    0<\epsilon<\epsilon^\star
    =
    \frac{a_\mu}
    {
        \gamma+
        \dfrac{\beta^2}{4d(1-d)\alpha_s}
    } .
    \label{eq:epsilon_star_appendix}
\end{equation}
so \(\dot V_c\) is locally negative definite and \((x^\star,0)\) locally exponentially
stable. The best weight \(d\) is \(1/2\) only if \(L_1=L_2\)
(\(d=\N{epsstar_torque_dbest}\) here). The bound depends on the state scaling through
\(V_s,L_1,L_2\): the torque-equivalent coordinates of Sec.~\ref{subsec:exp_eps} put every
slow state in the unit of the fast error, \Nm, so \(\epsilon^\star\) is a sufficient bound
in those coordinates, not an invariant of the plant, and carries its coordinates with it.
\end{IEEEproof}

\bibliography{ref}
\bibliographystyle{IEEEtran}

\end{document}